\pgfplotsset{width=7cm,compat=1.8}
\renewcommand*{\familydefault}{\sfdefault}
\def\flatnetwork{\mathcal{G}}
\def\flatnodes{\mathcal{N}}
\def\flatarcs{\mathcal{A}}
\def\suppliers{\mathcal{S}}
\def\supplier{s}
\def\warehouses{\mathcal{W}}
\def\warehouse{w}
\def\customers{\mathcal{C}}
\def\customer{c}
\def\timehorizon{\mathcal{T}}
\def\time{t}
\def\products{\mathcal{P}}
\def\product{p}
\def\expnetwork{\mathcal{G}_\timehorizon}
\def\expnodes{\mathcal{N}_\timehorizon}
\def\exptarcs{\mathcal{A}_\timehorizon}
\def\tarc{((i,t),(j,t'))}
\def\harc{((i,t),(i,t+1))}
\def\expharcs{\mathcal{H}_\timehorizon}
\def\flowstvariables{x^{ptt'}_{ij}}
\def\flowshvariables{x^{ptt+1}_{ii}}
\def\trucksvariables{y^{tt'}_{ij}}
\def\flowshvariablesaggregated{x^{\superproduct tt+1}_{ii}}
\def\flowstkvariablesaggregated{x^{\superproduct_k tt'}_{ij}}
\def\flowshkvariablesaggregated{x^{\superproduct_k tt+1}_{ii}}
\def\flow{x}
\def\truck{y}
\def\z{z}
\def\fcost{f_{ij}}
\def\ltcost{c_{ij}}
\def\lhcost{c_{ii}}
\def\transittime{t_{ij}}
\def\demand{d^p_{ct}}
\def\d{d}
\def\D{D}
\def\trucksalloc{\bar{y}}
\def\trucksvariablesalloc{\bar{y}^{tt'}_{ij}}
\def\superproducts{\Xi}
\def\superproduct{\chi}
\def\truckcapacity{\hat u}
\def\extremerays{\Omega}
\def\extremeray{\rho}
\def\extremepoints{\Gamma}
\def\extremepoint{\pi}
\def\K{\mathcal{K}}
\def\paths{\Lambda}
\def\path{\lambda}
\def\pathflow{\gamma}
\def\matching{m}
\def\timelimitphaseone{t_1^{max}}
\def\timelimitphasetwo{t_2^{max}}
\def\timelimitlb{t^{bounds}}
\def\imprlb{impr^{bounds}}
\def\minmastersols{msols_{max}}
\def\productpercentage{\phi}
\def\families{\mathcal{F}}
\def\proximity{d}
\def\warehousecapacity{lim^{\warehouses}_i}
\definecolor{linkcol}{rgb}{0,0,0.4} 
\definecolor{citecol}{rgb}{0.5,0,0} 
\definecolor{linkcol}{rgb}{0,0,0} 
\definecolor{citecol}{rgb}{0,0,0}
\definecolor{myOrange}{rgb}{1,0.5,0}
\definecolor{myDarkOrange}{rgb}{.698,.349,0}
\definecolor{mySoftOrange}{rgb}{.996,.875,.749}
\definecolor{myVerySoftOrange}{rgb}{.996,.749,.502}	
\definecolor{myOrange}{rgb}{1,0.5,0}
\definecolor{colorRoot1}{rgb}{.184,.062,.8}
\definecolor{colorRoot2}{rgb}{.0,.408,.627}
\definecolor{colorRoot3}{rgb}{.55,.455,.828}	
\definecolor{colorRoot4}{rgb}{1,.847,.341}	
\definecolor{colorTime}{rgb}{.757,.263,.294}
\newtheorem{theorem}{Theorem}[section]
\providecommand{\keywords}[1]
{
  \small	
  \textbf{\textit{Keywords---}} #1
}
\title{Meta Partial Benders Decomposition for the Logistics Service Network Design Problem}
\author[1]{Simon Belieres}
\author[2]{Mike Hewitt}
\author[3]{Nicolas Jozefowiez}
\author[4]{Fr\'ed\'eric Semet}
\affil[1]{CNRS, LAAS, 7 Avenue du Colonel Roche, 31077 Toulouse Cedex 4, France}
\affil[2]{Quinlan School of Business, Loyola University, 16 E. Pearson Ave., IL, Chicago 60611, USA }
\affil[3]{LCOMS EA 7306, Universit\'e de Lorraine, Metz 57000, France}
\affil[4]{Universit\'e de Lille, CNRS, Centrale Lille, Inria, UMR 9189 - CRIStAL, F-59000 Lille, France}
\date{}                     
\begin{document}

\maketitle

\vspace{-1cm}

\begin{abstract}
Supply chain transportation operations often account for a large proportion of product total cost to market. Such operations can be optimized by solving the Logistics Service Network Design Problem (LSNDP), wherein a logistics service provider seeks to cost-effectively source and fulfill customer demands of products within a multi-echelon distribution network. However, many industrial settings yield instances of the LSNDP that are too large to be solved in reasonable run-times by off-the-shelf optimization solvers. We introduce an exact Benders decomposition algorithm based on partial decompositions that strengthen the master problem with information derived from aggregating subproblem data. More specifically, the proposed Meta Partial Benders Decomposition intelligently switches from one master problem to another by changing both the amount of subproblem information to include in the master as well as how it is aggregated. Through an extensive computational study, we show that the approach outperforms existing benchmark methods and we demonstrate the benefits of dynamically refining the master problem in the course of a partial Benders decomposition-based scheme.
\end{abstract}

\keywords{Logistics, Service Network Design, Supply Chain, Benders Decomposition}

\section{Introduction}
\label{sec:intro}
\noindent

Supply chains are complex networks of partner stakeholders that create products and distribute them to a consumer market. These stakeholders can be decomposed into echelons, wherein each echelon group stakeholders that serve similar roles. A major share of the supply chain operating costs is due to freight transportation within the multi-echelon distribution network that links stakeholders. As a result, cost-effective transportation operations are essential to ensure profitability. An increasingly common trend to reduce distribution costs is to outsource supply chain management activities to third-party logistics (3PL) service providers \cite{Marasco:2008} that specialize in the integration of warehousing and transportation services. In this paper, we present a new algorithmic strategy for solving a transportation problem encountered by a 3PL partner in the management of a restaurant supply chain. 

In the problem studied, restaurants place orders of products for the coming weeks and the 3PL must design a transportation plan to fulfill these orders. As the products ordered by the restaurants are generic, i.e. they can be shipped by different suppliers of the supply chain, designing the transportation plan requires sourcing the products ordered by the restaurants and routing these products through a multi-echelon distribution network. This planning process can be assisted through the solution of the Logistics Service Network Design Problem (LSNDP). The LSNDP is a tactical transportation problem with few dedicated studies. Dufour et al. \cite{Dufour:2018} present a case study of the United Nations Humanitarian Response Depot, a humanitarian logistics service provider. They study a supply chain that aims to distribute relief goods and equipments in East Africa and they present a logistics service network design model for optimizing the transportation operations. They solve instances based on different network configurations and they demonstrate that substantial savings can be achieved by integrating a new distribution center. Belieres et al. \cite{Belieres:2020} describe the LSNDP studied in this article. They investigate how the distribution strategy used by the 3PL can negatively impact overall logistics costs and they assess the savings that can be generated by extending this distribution strategy.

Belieres et al. \cite{Belieres:2019} propose an effective Benders decomposition-based algorithm to solve this problem. Their strategy includes valid inequalities, heuristic solutions, as well as an advanced decomposition strategy inspired by the recently-proposed \textit{Partial Benders Decomposition} (PBD). Introduced in the context of solving two-stage stochastic programs, the PBD retains some subproblem structure into the master problem to reduce the number of iterations and accelerate the algorithm convergence. In a similar fashion, Belieres et al. \cite{Belieres:2019} propose to reinforce the master problem by including variables and constraints that model the need to satisfy customer demands of a ``super-product'' that aggregates all the products to be routed. The authors demonstrate that strengthening the master problem with this aggregated information enables an enhanced Benders decomposition-based algorithm to produce provably high-quality solutions in much less time than a general-purpose mixed-integer programming solver. In this paper, we present multiple PBD-related enhancements to the algorithm presented in \cite{Belieres:2019}, yielding an algorithm that exhibits much better computational performance.

First, whereas Belieres et al. \cite{Belieres:2019} consider a single super-product, we consider multiple. Specifically, we propose determining a set of super-products and then adding variables and constraints to the master problem related to routing each super-product. To determine a set of super-products of a given cardinality, we propose partitioning the set of products and associating a super-product with each set of the partition. In addition, we characterize settings wherein the set of products can be partitioned to form super-products that yield a master problem provably equivalent to the original problem. Namely, a master problem that enables a Benders decomposition-based algorithm to converge in a single iteration. We use that characterization as the basis of a strategy for partitioning the set of products into a given number of subsets, and resulting super-products, in more general settings.

One would expect a trade-off associated with using multiple super-products to formulate the master problem. On the one hand, a greater number of super-products should yield a stronger master problem and enable the algorithm to converge in fewer iterations. For example, considering more super-products will increase the likelihood that the resulting master problem is provably equivalent to the original problem. On the other hand, a greater number of super-products will likely yield a master problem that is harder to solve, and thus the algorithm will spend more time executing each iteration. However, it is not yet known how to accurate estimate the impact of either on the execution of the resulting Benders decomposition-based algorithm. 

Thus, we propose a new algorithmic strategy that exploits this feature by intelligently varying the amount of subproblem information used to formulate the master problem. We refer to this strategy as Meta Partial Benders Decomposition (Meta-PBD). Meta-PBD operates in two phases, with the first aiming to explore different areas within the feasible region of the original problem and to quickly determine high-quality solutions. To do so, the number of super-products used to formulate the master problem changes dynamically while respecting a threshold value to ensure computational tractability. The second phase aims to close the optimality gap. To do so, it drastically increase the amount of subproblem information used to formulate the master problem, such that the latter becomes provably equivalent to the original problem. While this master problem is computationally challenging, high-quality solutions generated through the first phase enable many nodes of the resulting branch-and-bound search tree to be pruned, accelerating convergence.

Our contribution is twofold. First, we provide theorical results on how to aggregate subproblem information to determine a strong partial Benders decomposition for the LSNDP. While the results presented are specific to the LSNDP, they can easily be extended to other multi-product network design problems. We also investigate how the level of aggregation of the subproblem information impacts the resulting partial Benders decomposition. To the best of our knowledge, this has not been studied in the literature. Second, we introduce a new exact algorithmic strategy for solving the LSNDP. Through an extensive series of experiments performed on random instances, we show that Meta-PBD determines provably high-quality solutions in limited running times and strictly outperforms the algorithm proposed by Belieres et al. \cite{Belieres:2019}, including solving more instances to optimality. We also demonstrate the effectiveness of our approach on a set of industrial instances \cite{Belieres:2020}.

The remainder of the paper is organized as follows. In Section 2, we review the relevant literature. In Section 3, we present a description and formulation of the Logistics Service Network Design Problem. In Section 4, we present a Benders decomposition-based algorithm for this problem as well as different super-product-based master problems. In Section 5, we present the Meta-PBD algorithm. In Section 6, we assess the performance of Meta-PBD with a series of computational experiments. In Section 7, we draw conclusions and discuss future work.

\section{Literature review}
\label{sec:lit_review}
\noindent
In this section, we first review the literature relevant to the Logistics Service Network Design Problem (LSNDP). Then, we review the literature documenting advanced decomposition strategies for enhancing Benders decomposition-based algorithms.

Since the LSNDP \cite{Dufour:2018,Belieres:2019} has received little attention to date, we position it regarding existing network design problems. The LSNDP is similar to the Service Network Design Problem (SNDP) \cite{Crainic:2000, Wieberneit:2008} in that both problems focus on transportation planning decisions within a facility network. However, the LSNDP considers supply chains wherein products are made by multiple suppliers and thus fulfilling customer orders involves a sourcing decision. This is in contrast to most variants of the SNDP studied in the literature, wherein the origin of goods to be transported is presumed to be given. Besides, the SNDP makes no presumptions regarding the structure of the network, while the LSNDP presumes a multi-echelon structure. On the other hand, the LSNDP is similar to supply chain optimization problems \cite{Beamon:1998} such as the Logistics Network Design Problem (LNDP) \cite{Srivastava:2008} and the Supply Chain Network Design Problem (SCNDP) \cite{Melo:2009}, which also aim to optimize the sourcing and fulfillment of orders through a multi-echelon distribution network. Nevertheless, supply chain optimization problems primarily focus on strategic planning decisions such as facility location \cite{Cordeau:2006,Cheong:2007}, whereas the LSNDP assumes the facilities in a network, and their capacities, have already been established. In addition, the LSNDP seeks to optimize transportation costs that are generally not estimated accurately in supply chains optimization problems, since these problems usually cover long-term planning horizons that are discretized in monthly or yearly periods. For more details regarding the positioning of the LSNDP with regard to the SNDP and the LNDP, we refer the interested reader to the literature review proposed in \cite{Belieres:2019}

To solve the LSNDP, we propose a solution approach based on Benders decomposition. Proposed by J.F. Benders \cite{Benders:1962} in 1962, Benders decomposition is designed for large-scale optimization problems that possess a certain feature. Namely, that the problem involves variables, often referred to as complicating variables, that, when fixed,  yield (sub)problems that are significantly less difficult to solve computationally-speaking. Instead of solving one large Mixed Integer Linear Programs (MILP), Benders decomposition separates the problem into a master problem that is solved to determine the values of the complicating variables, and one or multiple subproblems formulated by fixing the complicating variables to the values of the most recent master problem solution. When subproblems are feasible, and variable values from solutions to the master problem and subproblems can be used to form a provably optimal solution to the original problem, the algorithm stops. When they cannot, Benders cuts derived from the subproblems are used to strengthen the master problem, and the process repeats.

The Benders decomposition algorithm has been used to solve a wide range of optimization problems, including network design problems \cite{Costa:2005}. However, only implementing the steps of Benders decomposition that guarantee convergence often yields an algorithm that requires a substantial amount of time and memory before converging. As a result, many enhancement strategies have been proposed to accelerate the method. Instead of reviewing those here, we refer to \cite{Rahmaniani:2017}, which provides a taxonomy of these acceleration techniques. That taxonomy includes advanced algorithms for solving the master problem and the subproblem (e.g. \cite{Benoist:2002,Cordeau:2001}), approaches to generate solutions and constraints from those solutions (e.g. \cite{Costa:2012,Magnanti:1981,Rei:2009}), and advanced decomposition strategies, i.e. (e.g. \cite{Gendron:2016}). Contrary to standard decompositions, where all the linking constraints and non complicating variables are projected out, advanced decomposition strategies retain part of the subproblem information in the master problem, which can considerably improve the convergence of the algorithm. Indeed, solution algorithms based on standard Benders decomposition often struggle computationally-speaking, as partitioning complicating and non-complicating variables can lead to hide many of the underlying structures inherent to the original optimization problem. These algorithms instead have to iteratively re-discover representations of that structure by repeatedly solving master and sub-problems. In addition, many optimization solvers have effective techniques for detecting and exploiting known structures in optimization problems. On the other hand, advanced decomposition strategies retain subproblem information in the master problem, which reinforces the bounds it produces and allows to reduce the number of Benders iterations.

The exact Benders decomposition algorithm introduced in this study relies on such an advanced decomposition strategy: the  \textit{Partial Benders Decomposition} proposed by Crainic et al. \cite{Crainic:2014,Crainic:2016}. Proposed in the context of solving two-stage stochastic programs, the partial decomposition technique consists in including explicit information from the scenario sub-problems in the master problem. Through an extensive computational study, the authors demonstrate that Benders schemes based on partial decompositions rather than standard decompositions yields significant improvements in terms of stability of the solution process and computational time. Recently, partial decomposition techniques has also been applied in the context of solving deterministic programs. In that case, the information used to strengthen the master problem is obtained by aggregating data related to the subproblem. Fontaine et al. \cite{Fontaine:2017} address a multi-modal variant of the SNDP and propose a Benders decomposition algorithm wherein the master problem is reinforced by considering an aggregation of capacity over compartments of each service. Belieres et al. \cite{Belieres:2019} propose a similar type of enhanced Benders decomposition algorithm for solving the LSNDP. In this approach, the master problem is reinforced with variables and constraints that model the need to route a super-product derived from the aggregation of all the products to be transported. An extensive computational study shows this new master problem yields significantly stronger lower bounds.

\section{Problem definition and mathematical model}
\label{sec:prob}
\noindent
In this section, we first define the problem considered in this paper. We then present our mathematical formulation of that problem.

\subsection{Problem definition}
\label{subsec:prob_defn}
\noindent 
We study a tactical transportation planning problem that is inspired by the operations of a distribution company supporting restaurant supply chains. Specifically, the distribution company plans product distribution from suppliers to customers over a fixed planning horizon. For more details on the practical application, we refer the interested reader to \cite{Belieres:2020}.

On the supply side, products are provided by a set of suppliers in different locations, and are classified into product families. Examples of product families in restaurant supply chains are meats, beverages, fruits, vegetables, and paper products. Each supplier specializes in a subset of product families and only provides products from those families in which it specializes. However, a supplier that specializes in a product family is not required to produce all the products in that family. For example, the fruit family may contain both bananas and apples but a fruit supplier may supply apples but not bananas. In the industrial setting that inspired this research there are seven product families and each supplier specializes in at most three of them. 


On the demand side, product deliveries are requested by customers, which are restaurants in the industrial problem that motivated this research. To facilitate their inbound logistics operations, each customer has a periodic schedule that specifies the days and time windows during which they can receive products. For example, restaurant A may request delivery of products every Monday between 6 and 7 a.m. Note that deliveries must occur during the specified time window, as early or delayed deliveries are not allowed. The types and quantities of products ordered do not have to be the same from one request to another. For example, restaurant A may request one pallet of bananas and one pallet of wine for the first Monday of the month, and one pallet of broccoli for the second Monday of the month. 

Restaurants place product orders that specify the delivery day and time window but not the supplier for each product. Thus, the distribution company must determine how to source orders. This in turn implies that a single customer order that consists of multiple products may be sourced from multiple suppliers. For example, to fulfill restaurant A's order for the first Monday of the month, the distribution company may decide to ship one pallet of bananas from supplier 1 and one pallet of wine from supplier 2. We presume that restaurant orders are known far enough in advance that suppliers can design production plans to avoid stockouts.


The distribution company can ship products directly from suppliers to customers. However, products are small relative to vehicle capacity. Thus, to consolidate orders and increase vehicle utilization, the distribution company may instead transport products through a distribution network that links suppliers with customers. Intermediate terminals within this distribution network are referred to as \textit{Warehouses}. Each warehouse can store a limited amount of product, but doing so incurs a per-unit, per-unit-of-time cost. A vehicle dispatched from a supplier to a warehouse or from a warehouse to another warehouse can carry products destined for different customers. However, for this industrial partner, a vehicle dispatched to a customer can only transport products intended for that customer. Figure \ref{fig:supply_chain_network} represents an example of such a distribution network, where $S_{x}$ nodes are supplier facilities, $C_{x}$ nodes are customer locations, and $W_{x}$ nodes are warehouses within the distribution network.

 \begin{figure}[H]
          \centering \includegraphics[scale=0.05]{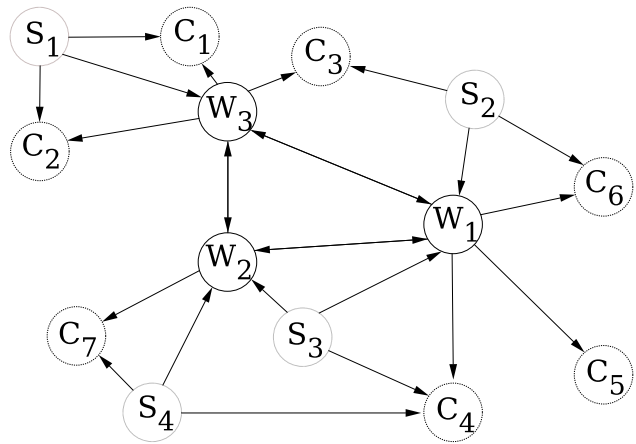}
          \caption{Distribution network}
          \label{fig:supply_chain_network}
    \end{figure}

Transportation between a pair of terminals is denoted as a \textit{service}. A service is defined by a departure terminal, a departure time, an arrival terminal, and an arrival time. Using a service requires the assignment of  transportation units to that service. Each unit has a fixed capacity and a cost. Thus, the distribution company determines the services to execute and the number of vehicles to dispatch on each service. 

We focus on a situation wherein the distribution company is an asset-less logistics service provider and outsources transportation operations to a third party carrier. The distribution company develops a transportation plan for moving products and communicates the resulting needs for point-to-point transportation moves to the carrier. Therefore, the distribution company does not make vehicle fleet management decisions such as planning empty moves. Similarly, the distribution company assumes that the carrier's fleet is large enough to satisfy the services it wants to have performed. In total, the third-party logistics company seeks to determine the shipment origin of each product requested, as well as the services and transportation capacities needed to support deliveries. Their objective is to minimize the overall cost of product transportation and storage.

\subsection{Mathematical model}
\label{subsec:prob_form}
\noindent
We model the supply chain with a directed network, $\flatnetwork = (\flatnodes{} ,\flatarcs)$ with node set $\flatnodes$ and arc set $\flatarcs.$ The node set $\flatnodes$ is composed of nodes that model suppliers $\suppliers$, customers $\customers$, and warehouses $\warehouses$. The arc set $\flatarcs$ is composed of arcs that model transportation between pairs of nodes. Due to the multi-echelon structure of the supply chain, $\flatarcs$ does not contain arcs that model transportation to a supplier, or arcs that model transportation from a customer. A transportation arc can only be defined in one of the following situations: i) from a supplier to a warehouse; ii) from a supplier to a customer; iii) from a warehouse to another warehouse; iv) from a warehouse to a customer. With each arc $a =(i,j) \in \flatarcs$ is associated a travel time $\transittime  \in \mathbb{R}^{+*}$, a per unit of flow cost $\ltcost \in \mathbb{R}^{+*}$, a vehicle capacity $\truckcapacity$, and a fixed cost per vehicle, $\fcost \in \mathbb{R}^{+*}$.  

The set of all products ordered by customers is denoted by $\products.$ The set of products manufactured by supplier $i$ is denoted by $\products^i$. We denote the set of product families as $\families=\{\families_1,...,\families_{|\families| } \}$. As each product belongs to a single family, product families form a partition of the product set. Formally, we have that $\forall (i,j) \in \{1,...,|\families|\}^2, i \neq j, \families_i \cap \families_j = \emptyset,$ and $\bigcup\limits_{i \in \{1,...,|\families| \}} \families_i = \products$.

The distribution company determines a transportation plan over a time horizon of length $\timehorizon$. To model the time dimension, we consider a time-expanded network $\expnetwork = \left(\expnodes, \expharcs \cup \exptarcs \right)$ derived from the network $\flatnetwork$. $\expnetwork$ contains a time-expanded node $(i, t) \in \expnodes,$ for each $i \in \flatnodes$ and $t \in \timehorizon$. Thus, time-expanded nodes model either  time-expanded suppliers $\suppliers_\timehorizon$, time-expanded customers $\customers_\timehorizon$ or time-expanded warehouses $\warehouses_\timehorizon$. Arcs in $\expharcs$ model product storage at warehouses. Thus, for each $i \in \warehouses$ and each $t \in [1, |\timehorizon|-1]$, there is a time-expanded arc $((i, t), (i, t+1))$ in $\expharcs$ with a per-unit-of-flow cost $c_{ii}$ and a storage capacity ${\warehouse}lim_i$. Arcs in $\flatarcs_\timehorizon$ model transportation between different locations, taking account of the travel time. To construct these arcs, for each $(i, j) \in \flatarcs$ and each time $t \in \timehorizon$ such that $t+t_{ij} < |\timehorizon|$, we build a time-expanded arc $((i, t), (j, t+t_{ij}))$. Therefore, an arc $((i, t),(j, t+t_{ij}))$ in $\exptarcs$ models the flow of products departing from $i$ at time $t$, and arriving to $j$ at time $t+t_{ij}$. We note that before creating the time-expanded graph $\expnetwork,$ arc travel times $t_{ij}$ may need to be modified to ensure that arcs $(i,j) \in \flatarcs$ can be mapped to arcs of the form $((i, t), (j, t+t_{ij}))$. As an example, for an arc $(i, j)$ with $t_{ij}=5h$, a 3-hours discretization requires to round up $(i, j)$ travel time to the nearest multiple of the time-step, i.e. 6 hours. Consequently, discretizing the planning horizon may introduce travel time approximations that tend to reduce with finer time-steps. 

Thus, we formulate the Logistics Service Network Design problem defined over a time-expanded graph $\expnetwork$. For each transportation arc $\tarc \in \exptarcs$, the integer variable $\trucksvariables$  corresponds to the number of trucks dispatched. For each transportation/holding arc $\tarc \in \expharcs \cup \exptarcs$, the continuous variable $\flowstvariables$ models the flow of product $p$ on that arc.  Note that variable  $\flowstvariables$ is not defined if $\tarc$ is a transportation arc originating from a supplier $i$ that does not manufacture product $p$ (i.e. $p \notin  \products^i$). Customer demands are represented by $\demand$, which is the amount of product $\product \in \products$ requested by customer $c \in \customers$ to be delivered at time $t \in [1, |\timehorizon|]$. 

It should be noted that the formulation of the LSNDP below explicitly references product families. This is technically not necessary. However, as we leverage this problem characteristic in the Benders decomposition-based algorithm we present later in this paper, we include them in this formulation. Ultimately, the LSNDP can be stated as:

\begin{align}
\mbox{minimize } \sum\limits_{\tarc \in \exptarcs}     \fcost \trucksvariables  \hspace{0.2cm} +  \hspace{0.2cm}  \sum\limits_{\tarc \in \exptarcs}  \sum\limits_{\product \in \products}    \ltcost \flowstvariables \hspace{0.2cm} +  \hspace{0.2cm}  \sum\limits_{\harc \in \expharcs} \sum\limits_{\product \in \products}   \lhcost \flowshvariables  
\label{objectiveFULL}
\end{align}
subject to the following constraints : 

\vspace{-0.2cm} 
\begin{align}
\sum\limits_{\tarc \in \exptarcs \cup \expharcs} \flowstvariables- \sum\limits_{((j,\time'),(l,\time'')) \in \exptarcs \cup \expharcs} \flow^{\product\time'\time''}_{jl} = 0, \hspace{0.5cm} \forall (j,\time') \in \warehouses_{\timehorizon}, \forall \product \in \families_i , \forall \families_i \in \families 
\label{flowConservationConstraintFULL}
\end{align}

\vspace{-0.2cm} 
\begin{align}
\sum\limits_{\tarc \in \exptarcs} \flow^{\product\time\time'}_{ij} \geq \d^\product_{j\time'}, \hspace{0.5cm} \forall (j,\time') \in \customers_{\timehorizon}, \forall \product \in \families_i , \forall \families_i \in \families 
\label{demandConstraintFULL}
\end{align}

\vspace{-0.3cm} 
\begin{align}
\sum\limits_{\product \in \products} \flowshvariables \leq \warehousecapacity,  \hspace{0.5cm} \forall \harc \in \expharcs
\label{warehouseCapacityConstraintFULL}
\end{align}

\vspace{-0.3cm} 
\begin{align}
\sum\limits_{\product \in \products} \flowstvariables \leq \truckcapacity \trucksvariables,  \hspace{0.5cm} \forall \tarc \in \exptarcs
\label{truckActualizationConstraintFULL}
\end{align}

\vspace{-0.3cm} 
\begin{align}
\flowstvariables \in  \mathbb{R}^{+},  \hspace{0.5cm} \forall \tarc \in \exptarcs \cup \expharcs,  \hspace{0.15cm}  \forall \product \in \products^i
\label{flowsDomainFULL}
\end{align}

\vspace{-0.4cm} 
\begin{align}
\trucksvariables \in  \mathbb{N}^{+},  \hspace{0.5cm} \forall \tarc \in \exptarcs 
\label{trucksDomainFULL}
\end{align}

The LSNDP seeks to minimize the transportation plan overall cost \eqref{objectiveFULL}, which is the sum of trucks fixed costs on transportation arcs (first term), and flow linear costs on transportation arcs (second term) and holding arcs (third term). Flow feasibility is ensured by the two first constraints. Constraints \eqref{flowConservationConstraintFULL} enforce the flow conservation at each warehouse. Constraints \eqref{demandConstraintFULL} enforce that each customer demands are fulfilled. Constraints \eqref{warehouseCapacityConstraintFULL} limit the total amount of product stored by each warehouse. Constraints \eqref{truckActualizationConstraintFULL} ensure that a sufficient number of trucks are allocated to transport products. Variable domains are defined by \eqref{flowsDomainFULL} and \eqref{trucksDomainFULL}.

\section{Benders decompositions for the LSNDP}
\label{sec:decomposition}
\noindent
In this paper, we present a Partial Benders Decomposition-based algorithm that intelligently switches between master problems strengthened with different aggregated subproblem informations. As such, this section focuses on different master problems for solving the LSNDP with a Benders decomposition-based algorithm. We begin this section by presenting a Benders decomposition for the problem based upon the standard master problem. We then introduce a new master problem that is reinforced with variables and constraints that model the need to route $\K$ super-products derived from a $\K$-partition of the product set. We study the theoretical properties of such a master problem, including establishing that it is a relaxation of the original problem. We demonstrate that the way products are partitioned impacts the strength of the resulting master problem. Lastly, we discuss the effect that the number of super-products $\K$ has on the strength of the resulting master problem. 


\subsection{Standard Benders decomposition for the LSNDP}
\label{subsec:straightforward}
\noindent

In the context of the LSNDP, a standard Benders decomposition yields a master problem that allocates vehicles on transportation arcs. The resulting subproblem aims to route products from suppliers to customers, while ensuring that the total flow on each transportation arc does not exceed the capacity defined by the solution to the master problem. Based on the set of extreme rays of the subproblem dual polyhedron, $\extremerays$, and the set of extreme points of the subproblem dual polyhedron, $\extremepoints$, the standard master problem, \textbf{SMP}, is formulated as follows:

\vspace{-0.2cm}
	
\begin{align} \text{min} \hspace{0.3cm} \sum\limits_{\tarc \in \exptarcs}     \fcost \trucksvariables + z 
\label{objectiveSMP}
\end{align}
   
\vspace{-0.4cm}

\begin{align}
0 \geq 
\sum\limits_{(c,t) \in \customers_\timehorizon} \sum\limits_{p \in \products} \demand \extremeray_{ct}^p 
+ \sum\limits_{\harc \in \expharcs} \warehousecapacity \extremeray_{ii}^{tt'}
- \sum\limits_{\tarc \in \exptarcs} \truckcapacity \extremeray_{ij}^{tt'}  y_{ij}^{tt'}, \hspace{0.3cm} \forall \extremeray \in \extremerays
\label{feasibilityConstraintSMP}
\end{align}

\vspace{-0.5cm}

\begin{align}
z \geq 
\sum\limits_{(c,t) \in \customers_\timehorizon} \sum\limits_{p \in \products} \demand \extremepoint_{ct}^p 
+ \sum\limits_{\harc \in \expharcs} \warehousecapacity \extremepoint_{ii}^{tt'}
- \sum\limits_{\tarc \in \exptarcs} \truckcapacity \extremepoint_   {ij}^{tt'}  y_{ij}^{tt'}, \hspace{0.3cm} \forall \extremepoint \in \extremepoints
\label{optimalityConstraintSMP}
\end{align}

\vspace{-0.6cm}
\begin{align}
\trucksvariables \in  \mathbb{N}^{+},  \hspace{0.5cm} \forall \tarc \in \exptarcs 
\label{trucksDomainSMP}
\end{align}

\vspace{-0.6cm}

\begin{align}
\z \in  \mathbb{R}^{+} 
\label{zDomainSMP}
\end{align}

The objective function, \eqref{objectiveSMP}, computes costs related to the allocation of the vehicle fleet as well as an estimate of product routing costs. Constraints \eqref{feasibilityConstraintSMP} and \eqref{optimalityConstraintSMP} are respectively the feasibility and optimality standard Benders cuts added dynamically after solving the subproblem. 

Based on an allocation of vehicles $\trucksalloc$, the subproblem \textbf{SP}$(\bar{y})$ is formulated as follows:

\vspace{-0.3cm}

\begin{align} 
\text{min} \hspace{0.05cm} \sum\limits_{\tarc \in \exptarcs} \sum\limits_{\product \in \products}    \ltcost \flowstvariables \hspace{0.05cm} +  \hspace{0.05cm} \sum\limits_{\harc \in \expharcs}   \sum\limits_{\product \in \products}  \lhcost \flowshvariables
\label{objectiveSP}
\end{align}

\begin{center}
   \eqref{flowConservationConstraintFULL}-\eqref{demandConstraintFULL}-\eqref{warehouseCapacityConstraintFULL}
\end{center}

\vspace{-0.5cm} 

\begin{align}
\sum\limits_{\product \in \products} \flowstvariables \leq \truckcapacity \trucksvariablesalloc,  \hspace{0.5cm} \forall \tarc \in \exptarcs
\label{fixedCapacityConstraintSP}
\end{align}

		\vspace{-0.55cm}

\begin{align}
\flowstvariables \in  \mathbb{R}^{+},  \hspace{0.5cm} \forall \tarc \in \exptarcs \cup \expharcs,  \hspace{0.15cm}  \forall \product \in \products^i
\label{flowsDomainSP}
\end{align}

Given a vehicle allocation $\bar{\truck}$, the subproblem seeks to route products from suppliers to customers at a minimal cost, while ensuring that the flow variables respect constraints \eqref{flowConservationConstraintFULL}-\eqref{demandConstraintFULL}-\eqref{warehouseCapacityConstraintFULL} from the original problem. Constraint \eqref{fixedCapacityConstraintSP} is defined on each transportation arc and imposes that the total flow cannot exceed the available capacity.


\subsection{A master problem based on super-products}
\label{subsec:reinforce}
\noindent
 The principle behind \textit{Partial Benders Decomposition} \cite{Crainic:2014,Crainic:2016} is to strengthen the master problem with information derived from the subproblem. Belieres et al. \cite{Belieres:2019} apply this principle in the context of solving the LSNDP by reinforcing the standard master problem with variables and constraints related to  routing  a single super-product obtained by aggregating all products $\product \in \products$. We extend this approach to multiple super-products. In particular, given a partition of the product set into subsets, we propose a master problem obtained by aggregating each product subset into its own super-product. We next present this master problem.

Let $\{ \products_1,...,\products_\K \}$ be a $\K$-partition of the product set $\products$. We create the super-product $\superproduct_k$ by aggregating all products included in subset $\products_k$. For a given customer $\customer$ at time $\time$, the demand of super-product $\superproduct_k$ is the sum of the demands for all products in $\products_k$, i.e. $\D^{\superproduct_k}_{\customer\time} = \sum\limits_{\product \in \products_k} \d^\product_{\customer\time}$. A super-product flow variable $\flow^{\superproduct_k \time\time'}_{ij}$ is defined for each arc $\tarc$ and each product $\product \in \products_k$ such that a flow variable $\flow^{\product\time\time'}_{ij}$ is defined in the LSNDP. Thus, for a supplier $i$ that does not supply any product of $\products_k$ (i.e. $\products^i \cap \products_k  = \emptyset$), and for all arcs originating from $i$, the continuous variable $\flow^{\superproduct_k \time\time'}_{ij}$ is not defined.
We let $\superproducts$ denote the set of all such super-products and refer to the resulting master problem as the \textbf{K-EMP}.

Figures \ref{before_aggreg_K} and \ref{after_aggreg_K} illustrate an example. The original problem is depicted in Figure \ref{before_aggreg_K}. A single customer requests one unit of $\product_1, \product_2, \product_3$ and $\product_4$. Supplier $\supplier_1$ manufactures products $\product_1$ and $\product_2$, while $\supplier_2$ manufactures $\product_2$ and $\product_3$, and $\supplier_3$ manufactures $\product_3$ and $\product_4$. Given a 2-partition of the products: $\{ \products_1 = \{\product_1,\product_2\}, \products_2 = \{\product_3,\product_4\} \}$ , we aggregate the products into two super-products $\superproduct_1$ and $\superproduct_2$. The aggregated problem is depicted in Figure \ref{after_aggreg_K}.  As $\customer$ requests one unit of  $\product_1$ and one unit of $\product_2$ in the original problem, $\customer$ request two units of super-product $\superproduct_1$ in the aggregated problem. As $\supplier_1$ and $\supplier_2$ manufacture at least one product of $\products_1$ in the original problem, they manufacture $\superproduct_1$ in the aggregated problem as well. On the contrary, as $\supplier_3$ does not manufacture any product of $\products_1$ in the original problem, it does not manufacture $\superproduct_1$ in the aggregated problem. A similar reasoning applies to super-product $\superproduct_2$ and products $\product_3$ and $\product_4$.

\begin{figure}[h!]
\RawFloats
       \begin{minipage}[b]{0.48\linewidth}
          \centering \includegraphics[scale=0.075]{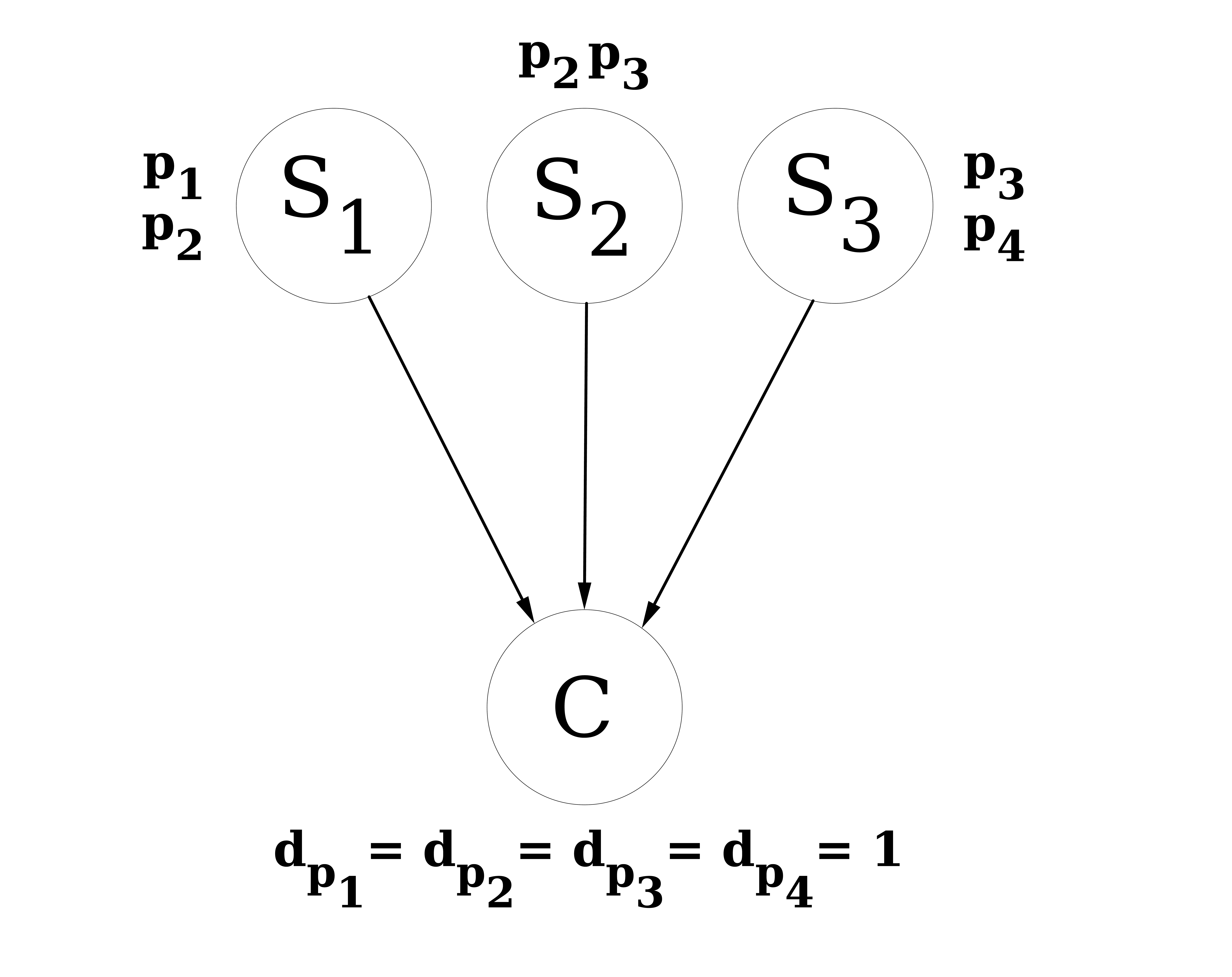}
          \caption{Before the aggregation of products}
          \label{before_aggreg_K}
       \end{minipage}\hfill 
       \begin{minipage}[b]{0.48\linewidth}   
          \centering \includegraphics[scale=0.075]{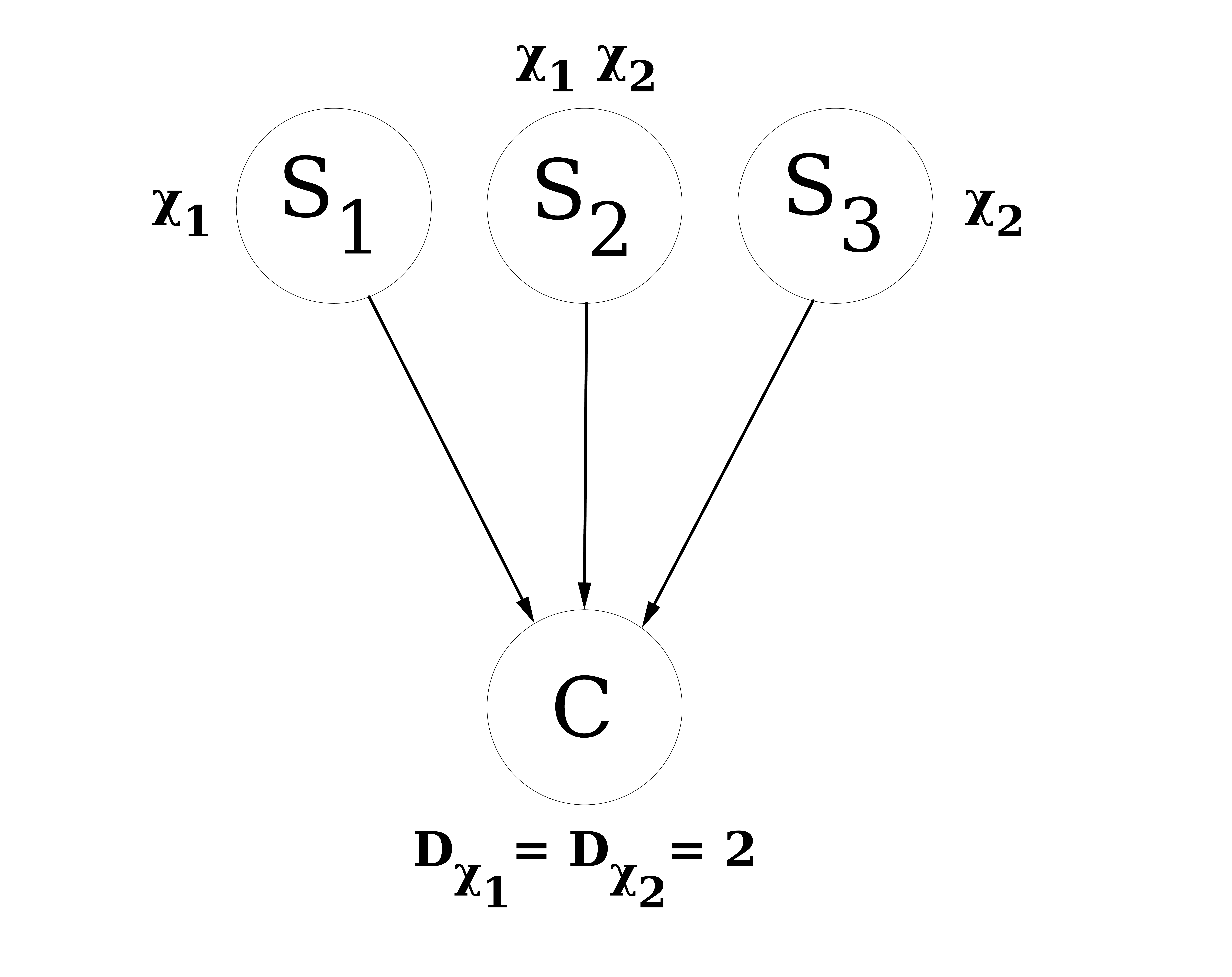}
          \caption{After aggregating $\product_1$ with $\product_2$, and $\product_3$ with $\product_4$}
          \label{after_aggreg_K}
       \end{minipage}
    \end{figure}

Note that the aggregation of products is an approximation of the original problem. For example, a supplier may manufacture a super-product $\superproduct_k$ in the \textbf{K-EMP} whereas it does not manufacture all products in $\products_k$ in the original problem.  
In  our example,  $\supplier_2$ in the aggregated problem manufactures all super-products and thus  can satisfy all customer demands. However, in the original problem $\supplier_2$ only manufacture products $\product_2$ and $\product_3$, and cannot satisfy demands for products $\product_1$ and $\product_4$. Thus, not every feasible solution to the \textbf{K-EMP} can be converted to a feasible solution for the LSNDP.

The \textbf{K-EMP} allocates vehicle capacity on transportation arcs in order to satisfy the routing of the K super-products. It is formulated as follows:

\vspace{-0.5cm}
	
\begin{align} \text{min} \hspace{0.3cm} \sum\limits_{\tarc \in \exptarcs}     \fcost \trucksvariables + \z 
\label{objectiveKEMP}
\end{align}
   
\vspace{-0.55cm}

\begin{align}
\sum\limits_{\tarc \in \exptarcs \cup \expharcs} \flowstkvariablesaggregated - \sum\limits_{((j,\time'),(l,\time''))\exptarcs \cup \expharcs}  \flow^{\superproduct_k \time'\time''}_{jl} = 0, \hspace{0.5cm} \forall (j,\time') \in \warehouses_{\timehorizon}, \forall \superproduct_k \in \superproducts
\label{flowConservationConstraintKEMP}
\end{align}

\vspace{-0.5cm}

\begin{align}
\sum\limits_{\tarc \in \exptarcs}  \flowstkvariablesaggregated \geq \D^{\superproduct_k}_{j\time'}, \hspace{0.5cm} \forall (j,\time') \in \customers_{\timehorizon}, \forall \superproduct_k \in \superproducts
\label{demandConstraintKEMP}
\end{align}

\vspace{-0.55cm}

\begin{align}
\flowshvariablesaggregated \leq \warehousecapacity,  \hspace{0.5cm} \forall \harc \in \expharcs
\label{warehouseCapacityConstraintKEMP}
\end{align}

\vspace{-0.55cm}

\begin{align}
\sum\limits_{\superproduct_k \in \superproducts} \flowstkvariablesaggregated \leq \truckcapacity \trucksvariables,  \hspace{0.5cm} \forall ((i,\time),(j,\time')) \in \exptarcs
\label{truckActualizationConstraintKEMP}
\end{align}

\vspace{-0.65cm}

\begin{align}
\z \geq \sum\limits_{\tarc \in \exptarcs}   \sum\limits_{\superproduct_k \in \superproducts}  \ltcost \flowstkvariablesaggregated  +  \sum\limits_{\harc \in \expharcs}  \sum\limits_{\superproduct_k \in \superproducts}  \lhcost \flowshkvariablesaggregated
\label{zConstraintKEMP}
\end{align}

\vspace{-0.25cm}

\begin{center} \eqref{feasibilityConstraintSMP}-\eqref{optimalityConstraintSMP}  \end{center}

\vspace{-1cm}

\begin{align}
\flowstkvariablesaggregated \in  \mathbb{R}^{+},  \hspace{0.5cm} \forall ((i,\time),(j,\time')) \in \exptarcs \cup \expharcs,  \hspace{0.15cm}  \forall \superproduct_k \in \superproducts, \hspace{0.15cm} \products^i \cap \products_k  \neq \emptyset
\label{flowsDomainKEMP}
\end{align}

\vspace{-0.9cm}

\begin{align}
\trucksvariables \in  \mathbb{N}^{+},  \hspace{0.5cm} \forall ((i,\time),(j,\time')) \in \exptarcs 
\label{trucksDomainKEMP}
\end{align}

\vspace{-0.9cm}

\begin{align}
\z \in  \mathbb{R}^{+}
\label{zDomainKEMP}
\end{align}

The objective function is the same as for the \textbf{SMP}. Constraints \eqref{flowConservationConstraintKEMP} enforce the flow conservation of each super-product at each warehouse. Constraints \eqref{demandConstraintKEMP} ensure that, for each customer, each super-product demand is fulfilled. Constraint \eqref{warehouseCapacityConstraintKEMP}  limits the total amount of super-product stored by each warehouse. Constraints \eqref{truckActualizationConstraintKEMP} ensure that enough vehicle capacity is allocated to support the flows of super-products. Constraint \eqref{zConstraintKEMP} bounds the flows cost approximation $z$. Constraints \eqref{feasibilityConstraintSMP} and \eqref{optimalityConstraintSMP} are the standard Benders cuts added dynamically after solving the subproblem. Constraints \eqref{flowsDomainKEMP}, \eqref{trucksDomainKEMP}, and \eqref{zDomainKEMP} define the decision variables and their domain. 

For Benders to converge it must solve a master problem that is a relaxation of the original problem. While \cite{Belieres:2019} prove that \textbf{K-EMP} is a relaxation when $\K = 1$, the following is true for general $\K$. We defer the proof to the Appendix, Section \ref{sec:appendix}.

\begin{theorem}\label{theorem1}
The $\K$-enhanced master problem, \textbf{K-EMP}, is a relaxation of the Logistics Service Network Design problem, LSNDP.
\end{theorem}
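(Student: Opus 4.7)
The plan is to exhibit, for every feasible solution $(\bar{\flow},\bar{\truck})$ of the LSNDP, a feasible solution $(\hat{\flow},\hat{\truck},\hat{\z})$ of \textbf{K-EMP} whose objective value equals the LSNDP objective of $(\bar{\flow},\bar{\truck})$. Because \textbf{K-EMP} then admits a candidate solution of value at most that of any LSNDP feasible solution, its optimum is bounded above by that of the LSNDP, establishing the relaxation property.

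The natural construction sets $\hat{\truck}^{tt'}_{ij}=\bar{\truck}^{tt'}_{ij}$ on every transportation arc and aggregates flows by super-product: for each arc on which $\flowstkvariablesaggregated$ is defined, let $\hat{\flow}^{\superproduct_k tt'}_{ij}=\sum_{\product\in\products_k}\bar{\flow}^{\product tt'}_{ij}$, where the sum ranges only over products $\product$ for which $\bar{\flow}^{\product tt'}_{ij}$ is defined (at a supplier $i$ this restricts the sum to $\product\in\products_k\cap\products^i$). Finally, set $\hat{\z}$ equal to the total flow cost of $\bar{\flow}$, namely $\sum_{\tarc\in\exptarcs}\sum_{\product\in\products}\ltcost\bar{\flow}^{\product tt'}_{ij}+\sum_{\harc\in\expharcs}\sum_{\product\in\products}\lhcost\bar{\flow}^{\product tt+1}_{ii}$.

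I would then verify the K-EMP constraints in turn. Because $\{\products_1,\dots,\products_\K\}$ partitions $\products$, the super-product flow conservation \eqref{flowConservationConstraintKEMP} and super-product demand satisfaction \eqref{demandConstraintKEMP} follow from summing their LSNDP analogues \eqref{flowConservationConstraintFULL} and \eqref{demandConstraintFULL} over $\product\in\products_k$, invoking the definition $\D^{\superproduct_k}_{\customer\time}=\sum_{\product\in\products_k}d^{\product}_{\customer\time}$ for the latter. The warehouse capacity \eqref{warehouseCapacityConstraintKEMP}, the truck capacity \eqref{truckActualizationConstraintKEMP}, and the cost constraint \eqref{zConstraintKEMP} follow by additionally summing across all super-products $\superproduct_k\in\superproducts$ and using the telescoping identity $\sum_{\superproduct_k\in\superproducts}\hat{\flow}^{\superproduct_k tt'}_{ij}=\sum_{\product\in\products}\bar{\flow}^{\product tt'}_{ij}$.

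The main delicate step is the validity of the Benders cuts \eqref{feasibilityConstraintSMP}--\eqref{optimalityConstraintSMP} at $(\hat{\truck},\hat{\z})$. I would argue that $\bar{\flow}$ is itself a feasible flow in $\textbf{SP}(\hat{\truck})=\textbf{SP}(\bar{\truck})$ since $\bar{\truck}$ provides enough capacity to route $\bar{\flow}$, so the subproblem is feasible and, having a nonnegative objective, has a finite optimum. LP duality then ensures that every extreme ray $\extremeray\in\extremerays$ satisfies its associated feasibility cut at $\hat{\truck}$, and that $\hat{\z}$, being the primal objective of $\bar{\flow}$ in $\textbf{SP}(\hat{\truck})$, is at least the subproblem optimum and hence dominates every extreme-point expression $\sum d^{\product}_{\customer\time}\extremepoint^{\product}_{\customer\time}+\sum\warehousecapacity\extremepoint^{tt'}_{ii}-\sum\truckcapacity\extremepoint^{tt'}_{ij}\hat{\truck}^{tt'}_{ij}$, validating every optimality cut. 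To conclude, the K-EMP objective $\sum_{\tarc\in\exptarcs}\fcost\hat{\truck}^{tt'}_{ij}+\hat{\z}$ coincides with the LSNDP cost of $(\bar{\flow},\bar{\truck})$, closing the argument.
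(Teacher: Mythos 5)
Your proof is correct and follows essentially the same route as the paper's: aggregate the LSNDP flows by super-product, keep the vehicle variables, set $\z$ to the total flow cost, and verify the \textbf{K-EMP} constraints by summing the corresponding LSNDP constraints over each $\products_k$. You in fact go slightly further than the paper, which leaves the validity of the Benders cuts \eqref{feasibilityConstraintSMP}--\eqref{optimalityConstraintSMP} implicit, whereas your LP-duality argument (subproblem feasibility rules out violated ray cuts; weak duality bounds every extreme-point expression by $\hat{\z}$) checks them explicitly.
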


\subsection{Creating a strong super-product based master problem}
\label{subsec:effect_paritition}
\noindent
For a given $\K$, there are different ways to partition the set of products into $\K$ subsets of products, and thus different master problems, \textbf{K-EMP}. Different partitions of the product set may result in master problems that approximate the original problem to different degrees. We first identify a specific case where an appropriate partitioning of the product set gives a master problem that is equivalent to the original problem. Based on this observation, we define a metric that evaluates the potential benefit of aggregating a pair of products on the resulting Benders algorithm, and thus the potential benefit of including those products in the same subset of the $\K$-partition.


We first explain this specific case and then discuss the intuition behind why it yields a \textbf{K-EMP} that is equivalent to the original problem. Consider an instance of the original problem wherein each supplier that provides any product $\product \in \products_k$ can also provide all other products included in $\products_k$. In this case, the products of $\products_k$ can be aggregated without loss of information. Furthermore, given a $\K$-partition of the products $\{ \products_1,...,\products_\K \}$, if for all subsets of $\products_k$, each supplier that provides any product $\product \in \products_k$ can also provide all other products included in $\products_k$, then all product subsets of the $\K$-partition can be aggregated without loss of information. More precisely, one can prove that in this specific case  the resulting \textbf{K-EMP} is equivalent to the original problem. We next present the intuition behind a proof of this claim.


Figures \ref{equivalence_kemp_before} and \ref{equivalence_kemp_after} illustrate an example of equivalence between the \textbf{K-EMP} and the LSNDP. The original problem is depicted in  Figure \ref{equivalence_kemp_before}. Both products $\product_1$ and $\product_2$ are offered by $\supplier_1$ and $\supplier_2$. Aggregating $\product_1$ and $\product_2$ into super-product $\superproduct_1$ induces no loss of information. Indeed, if we aggregate $\product_1$ with $\product_2$ (see Figure \ref{equivalence_kemp_after}), the demand of super-product $\superproduct_1$ can be satisfied by $\supplier_1$ or $\supplier_2$. As the demand of super-product $\superproduct_1$ sums the demands for $\product_1$ and $\product_2$, the \textbf{K-EMP} assumes that both $\supplier_1$ and $\supplier_2$ can satisfy the demands for $\product_1$ and $\product_2$. Yet, in the original problem, $\supplier_1$ and $\supplier_2$ provide both $\product_1$ and $\product_2$. Thus, we can aggregate $\product_1$ and $\product_2$ without loss of information. Similarly, the aggregation of $\product_3$ and $\product_4$ into super-product $\superproduct_2$ induces no loss of information. Therefore, the $2$-partition  $\{ \products_1 = \{\product_1,\product_2\}, \products_2 = \{\product_3,\product_4\} \}$  yields a \textbf{K-EMP} that is equivalent to the LSNDP. 


\begin{figure}[h!]
\RawFloats
       \begin{minipage}[b]{0.45\linewidth}
          \centering \includegraphics[scale=0.075]{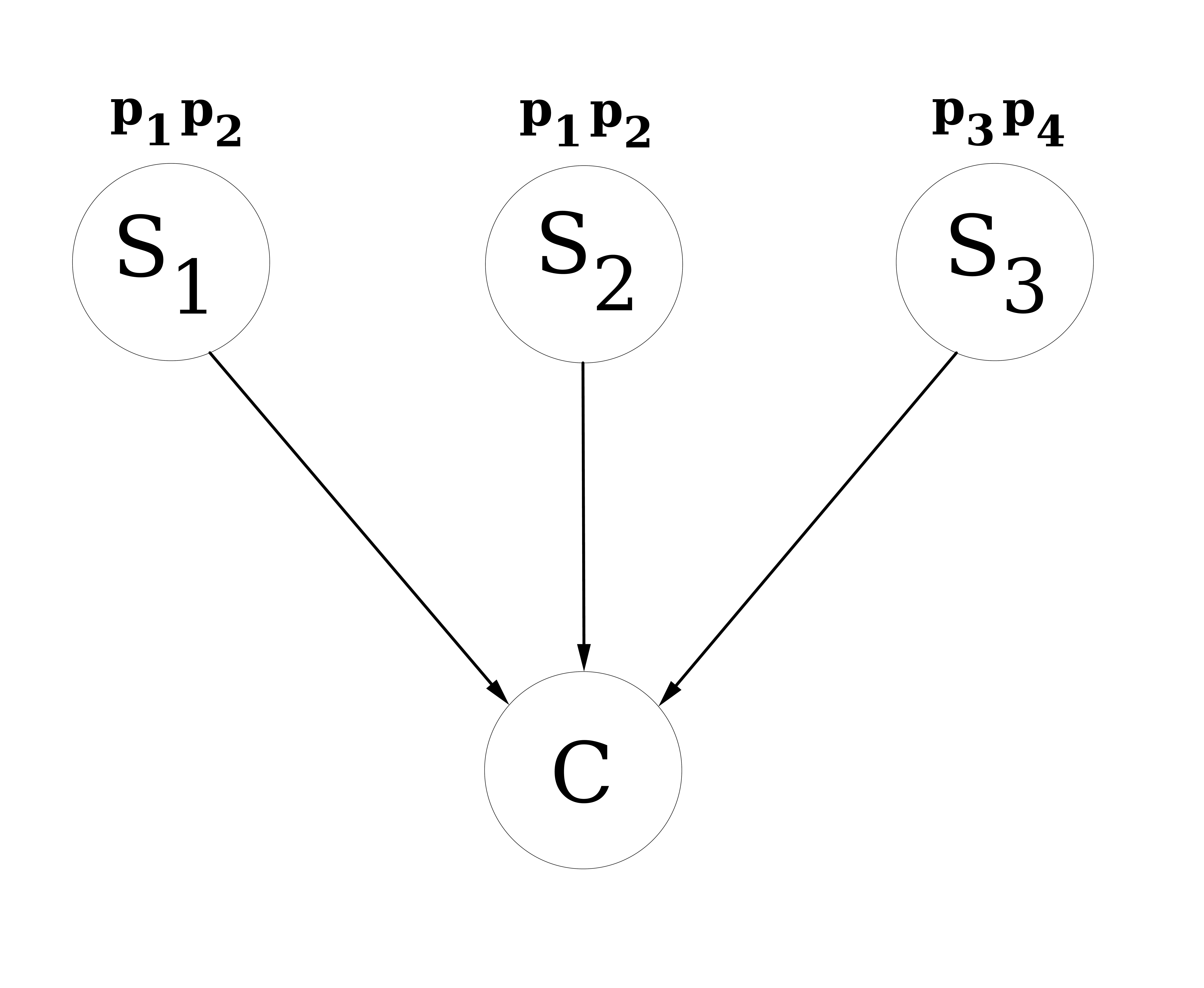}
          \caption{Before the aggregation of products}
          \label{equivalence_kemp_before}
       \end{minipage}\hfill 
       \begin{minipage}[b]{0.45\linewidth}   
          \centering \includegraphics[scale=0.075]{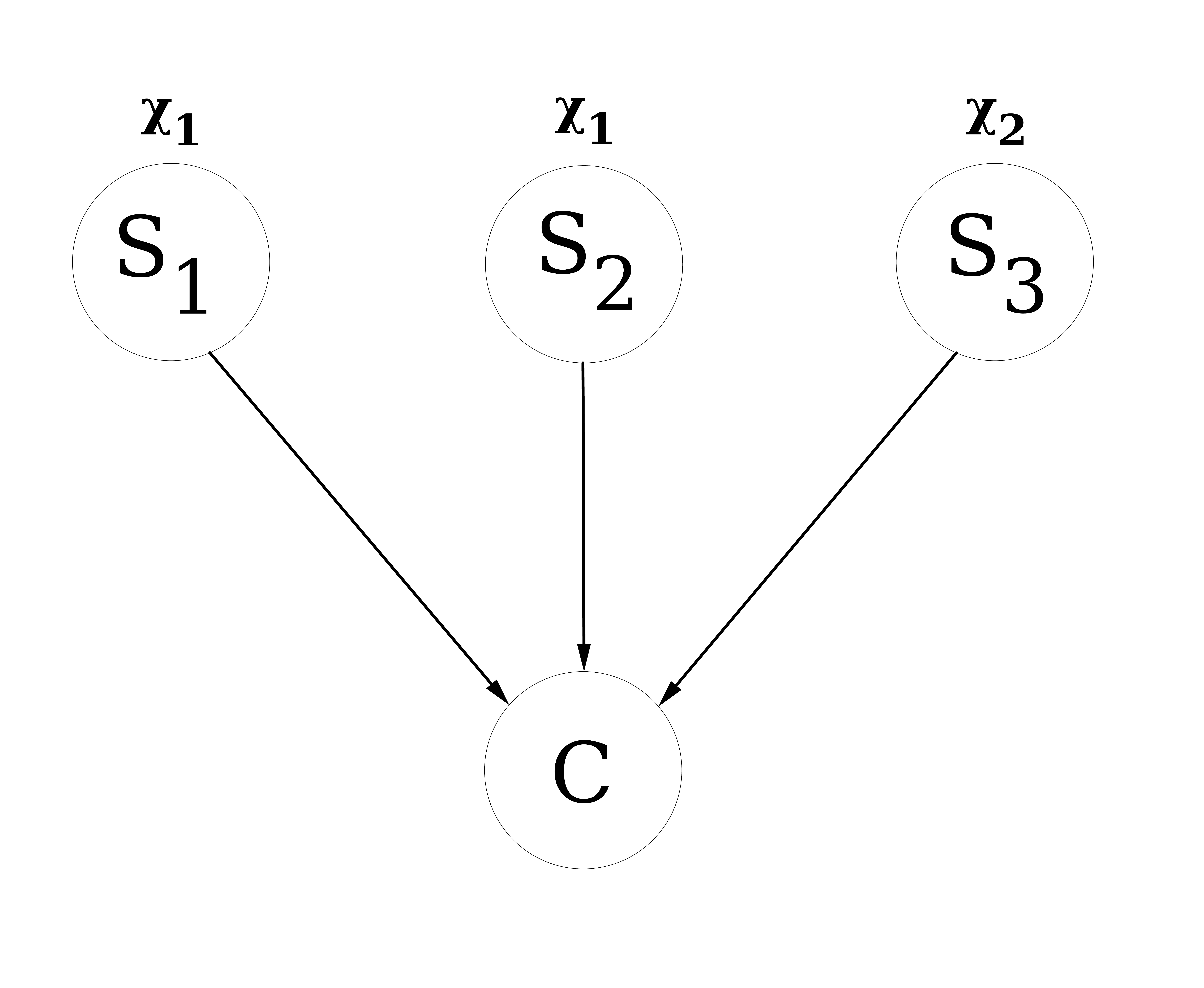}
          \caption{After aggregating $\product_1$ with $\product_2$, and $\product_3$ with $\product_4$}
          \label{equivalence_kemp_after}
       \end{minipage}
    \end{figure}

Formally, we have the following theorem, whose proof is included in the Appendix, Section \ref{sec:appendix}.
\begin{theorem}\label{theorem2}
Let $\{ \products_1,...,\products_\K  \}$ be a $\K$-partition of the products, and $\superproduct_k \in \superproducts, k \in \{1,...,\K\},$ the associated super-products. If for each $k \in \{1,...,\K\}$, each supplier $s \in \suppliers$ that provides any product $\product \in \products_k$ can also provide all other products included in $\products_k$, then the \textbf{K-EMP} is equivalent to the LSNDP.
\end{theorem}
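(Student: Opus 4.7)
The plan is to establish equality of the optimal objective values of \textbf{K-EMP} and the LSNDP via two inequalities. Theorem \ref{theorem1} already gives that the optimal value of \textbf{K-EMP} is at most that of the LSNDP. The substantive step is the reverse inequality, which I would prove by showing that every feasible \textbf{K-EMP} solution can be converted into an LSNDP-feasible solution of no greater cost.

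Fix any feasible \textbf{K-EMP} solution with truck allocation $\truck$, auxiliary variable $\z$, and super-product flows $\flowstkvariablesaggregated$ for each $\superproduct_k \in \superproducts$. I would keep $\truck$ unchanged and construct per-product flows $\flowstvariables$ for every $\product \in \products$ so that, together with $\truck$, they form an LSNDP-feasible solution. The key observation afforded by the hypothesis is that, for each $k$, the set of suppliers providing $\superproduct_k$ coincides with the set of suppliers providing every individual $\product \in \products_k$. Consequently, every arc on which $\flowstkvariablesaggregated$ may be positive is also an arc on which $\flowstvariables$ for every $\product \in \products_k$ may be positive, so disaggregation is permitted by the domain restriction in \eqref{flowsDomainFULL}. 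Concretely, for each $k$ I would invoke the classical acyclic-flow decomposition on $\expnetwork$ to write $\flowstkvariablesaggregated$ as a sum of path flows, each starting at a supplier node and ending at a customer node $(\customer,\time)$, possibly traversing warehouses via transportation and holding arcs. Since \eqref{demandConstraintKEMP} guarantees that the total flow arriving at $(\customer,\time)$ is at least $\sum_{\product \in \products_k} \demand$, a greedy per-customer assignment -- splitting path flows if needed -- labels each chunk of arriving flow with a single $\product \in \products_k$ so that at least $\demand$ units reach $(\customer,\time)$ for every $\product$. Setting $\flowstvariables$ on each arc equal to the sum of chunks labeled $\product$ traversing that arc yields a valid flow for every $\product \in \products_k$ satisfying \eqref{flowConservationConstraintFULL} and \eqref{demandConstraintFULL}.

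To conclude, by construction $\sum_{\product \in \products_k} \flowstvariables = \flowstkvariablesaggregated$ on every arc, and summing over $k$ shows that \eqref{truckActualizationConstraintKEMP} and \eqref{warehouseCapacityConstraintKEMP} directly imply \eqref{truckActualizationConstraintFULL} and \eqref{warehouseCapacityConstraintFULL} for the recovered flows. Because the arc costs $\ltcost$ and $\lhcost$ are product-independent, the LSNDP flow cost of the $\{\flowstvariables\}$ equals the \textbf{K-EMP} flow cost of $\{\flowstkvariablesaggregated\}$, which is bounded above by $\z$ through \eqref{zConstraintKEMP}; hence the resulting LSNDP solution has total cost no greater than that of the given \textbf{K-EMP} solution. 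The principal obstacle I anticipate is executing the path-decomposition and subsequent product-assignment rigorously in the presence of holding arcs and of possible surplus flow at customers, and verifying that the split-and-label procedure preserves per-product flow conservation throughout $\expnetwork$; this is handled by invoking the standard flow-decomposition theorem on the acyclic time-expanded network and viewing the customer-level assignment as a one-dimensional apportionment problem with sufficient aggregate supply.
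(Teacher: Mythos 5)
Your proposal is correct and follows essentially the same route as the paper's own proof: use Theorem \ref{theorem1} for one direction, then for the converse decompose the super-product flow into supplier-to-customer paths on the acyclic time-expanded network, apportion the flow arriving at each customer $(\customer,\time)$ into per-product chunks of size $\d^\product_{\customer\time}$ (valid because the hypothesis guarantees the path's origin supplier manufactures every $\product \in \products_k$), and verify that the reassembled arc flows satisfy \eqref{flowConservationConstraintFULL}--\eqref{truckActualizationConstraintFULL} with cost bounded by $\z$ via \eqref{zConstraintKEMP}. Your explicit treatment of surplus flow at customers (the $\geq$ in \eqref{demandConstraintKEMP}) is in fact slightly more careful than the paper's, which tacitly assumes the incoming flow equals $\D^{\superproduct_k}_{\customer\time}$.
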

Based on this theorem, we define a statistic that measures the potential benefit of including two products in the same product subset. For a product $\product$, we denote $\suppliers^\product$ as the set of suppliers that manufacture $\product$. We denote the matching rate of a pair of products $\product_i$ and $\product_j$ as:


$$ \matching (\product_i, \product_j) = \frac{|\suppliers^{\product_i} \cap \suppliers^{\product_j}|}{|\suppliers^{\product_i} \cup \suppliers^{\product_j}|} $$

This matching rate measures the fraction of suppliers of $\product_i$ or $\product_j$ that offer both products. We view this matching rate as an indicator of whether or not $\product_i$ and $\product_j$ should be aggregated. At one extreme, two products with a matching rate of one are provided by the same suppliers and thus can be aggregated without loss of information. At the other extreme, two products with a matching rate of zero have no common supplier and thus there is little to no perceived benefit on the Benders algorithm in aggregating them. We also define the matching rate of a product set $\products_i$ as the average matching rate of all pairs of products in that set, ie:

$$ \matching (\products_i) = \frac{2}{|\products_i|\times(|\products_i|-1)} \sum\limits^{i \leq |\products_i|}_{i=1} \sum\limits^{j \leq |\products_i|}_{j=i+1} \matching (\product_i, \product_j) $$

Similarly, this value comprised between zero and one indicates wether or not products of $\products_i$ should be aggregated. We note that, given $\{ \products_1,...,\products_\K  \}$ a $\K$-partition of the product set, the Theorem \ref{theorem2} holds as $\matching (\products_k) = 1, \forall k \in \{1,...,\K\}$. In the second phase of our Meta-PBD strategy, we propose an algorithm for determining such an "optimal" $\K$-partition of the product set.

\subsection{Relationship between $\K$ and the resulting \textbf{K-EMP} strength}
\label{subsec:effect_number}
\noindent
The number of super-products $\K$ used to formulate the \textbf{K-EMP} can vary between 0 and $|\products|$. When $\K=0$, the \textbf{K-EMP} corresponds to the standard Benders master problem, while when $\K=|\products|$ the master is strengthened with $|\products|$ super-products, with a one to one correspondance between super-products and products. In that case, the \textbf{K-EMP} is clearly equivalent to the original problem.  Similarly, the greater the number of super-products, the easier it is to meet the condition for applying Theorem \ref{theorem2}. Thus, one could conclude that the  higher the value of $\K$, the stronger the bound produced by solving the resulting super-product-based master problem.

However, this is not always true. In fact, solving a \textbf{(K+1)-EMP} may yield a weaker bound than solving a \textbf{K-EMP}. For example, consider again the LSNDP depicted in Figure \ref{equivalence_kemp_before}. We observed that the $2$-partition $\{ \products_1 = \{\product_1,\product_2\}, \products_2 = \{\product_3,\product_4\} \}$ results in a \textbf{2-EMP} that is equivalent to the original problem. Let us consider a $3$-partition $\{ \products'_1 = \{\product_1\}, \products'_2 = \{\product_2,\product_3\}, \products'_3 = \{\product_4\} \}$ that yields 3 super-products: $\superproduct_1$, $\superproduct_2$ and $\superproduct_3$. As the matching rate of $p_2$ and $p_3$ is null, this $3$-partition induces a loss of information. Thus, the $3$-partition defines a \textbf{3-EMP} that will produce a weaker bound than the \textbf{2-EMP} defined above.

A master problem with $\K$ super-products is not necessarily weaker than all master problems with $\K+1$ super-products. On the other hand, we have the following theorem that there always exists at least one master problem with $\K+1$ super-products such that the master problem based on $\K$ super-products is a relaxation. While we leave the proof of Theorem \ref{theorem3} to the Appendix (Section \ref{sec:appendix}), we note that the procedure for creating such a $\K+1$ master problem is straightforward. Namely, one needs to partition one of the product sets in the existing $\K$-partition.

\begin{theorem}\label{theorem3}
Given a $\K$-enhanced master problem, $\K \in \{1,...,|\products|-1\}$, there always exist a $\K+1$-enhanced master problem such that \textbf{K-EMP} is a relaxation of \textbf{K+1-EMP}
\end{theorem}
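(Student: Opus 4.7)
The plan is to prove the theorem constructively: starting from any $\K$-partition $\{\products_1,\ldots,\products_\K\}$ of $\products$, I would build a specific $(\K{+}1)$-partition by refining one of the subsets, and then show that every feasible solution of the resulting \textbf{K+1-EMP} can be projected to a feasible solution of the original \textbf{K-EMP} with the same objective value. This yields the relaxation relationship in terms of feasible regions (projected onto common variables) and objective bounds.

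First, since $\K \leq |\products| - 1$, at least one set in the partition, call it $\products_k$, satisfies $|\products_k| \geq 2$. Pick any nonempty proper subset $\products_k' \subsetneq \products_k$ and set $\products_k'' = \products_k \setminus \products_k'$. The refined $(\K{+}1)$-partition is $\{\products_1,\ldots,\products_{k-1},\products_k',\products_k'',\products_{k+1},\ldots,\products_\K\}$, giving super-products $\superproduct_k'$ and $\superproduct_k''$ in place of $\superproduct_k$, and associated demands $\D^{\superproduct_k'}_{ct} = \sum_{p\in\products_k'}\d^p_{ct}$ and $\D^{\superproduct_k''}_{ct} = \sum_{p\in\products_k''}\d^p_{ct}$, which satisfy $\D^{\superproduct_k}_{ct} = \D^{\superproduct_k'}_{ct} + \D^{\superproduct_k''}_{ct}$.

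Next, given a feasible solution $(\truck,\flow,\z)$ of the \textbf{K+1-EMP}, I construct a candidate solution $(\truck,\tilde\flow,\z)$ for the \textbf{K-EMP} by keeping $\truck$, $\z$, and the flows of all unchanged super-products, while defining $\tilde{x}^{\superproduct_k tt'}_{ij} = x^{\superproduct_k' tt'}_{ij} + x^{\superproduct_k'' tt'}_{ij}$ on each arc (with the convention that a flow variable equals $0$ whenever it is not defined at a supplier, i.e.\ when $\products^i \cap \products_k' = \emptyset$ or $\products^i \cap \products_k'' = \emptyset$). I would then verify each constraint family of the \textbf{K-EMP} in turn: summing the two flow-conservation equations \eqref{flowConservationConstraintKEMP} for $\superproduct_k'$ and $\superproduct_k''$ yields the conservation equation for $\superproduct_k$; summing the two demand inequalities \eqref{demandConstraintKEMP} yields the demand inequality for $\superproduct_k$ (using the demand additivity); the warehouse capacity \eqref{warehouseCapacityConstraintKEMP} and truck capacity \eqref{truckActualizationConstraintKEMP} constraints hold because the total flow on each arc remains the same after this re-aggregation; and the cost bound \eqref{zConstraintKEMP} is preserved since the per-unit costs $\ltcost,\lhcost$ do not depend on the super-product index, so the right-hand side is unchanged. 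The Benders cuts \eqref{feasibilityConstraintSMP}--\eqref{optimalityConstraintSMP} involve only $\truck$ and $\z$ and are therefore satisfied verbatim.

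Finally, the objective \eqref{objectiveKEMP} depends only on $\truck$ and $\z$, so the constructed \textbf{K-EMP} solution has the same objective value as the original \textbf{K+1-EMP} solution. Hence the projected feasible region of \textbf{K+1-EMP} onto $(\truck,\z)$ is contained in the feasible region of \textbf{K-EMP} and preserves the objective, which is exactly the relaxation statement. I expect the only subtle point to be bookkeeping around variable domains at suppliers—specifically, ensuring the convention that $x^{\superproduct_k' tt'}_{ij}=0$ whenever $\products^i\cap\products_k'=\emptyset$ is consistent with the domain restriction of $x^{\superproduct_k tt'}_{ij}$ in the \textbf{K-EMP}. This reduces to the observation that $\products^i \cap \products_k \neq \emptyset$ if and only if $\products^i \cap \products_k' \neq \emptyset$ or $\products^i \cap \products_k'' \neq \emptyset$, so $\tilde{x}^{\superproduct_k tt'}_{ij}$ is defined exactly where needed, which closes the argument.
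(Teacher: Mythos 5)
Your proposal is correct and follows essentially the same route as the paper's proof: refine the $\K$-partition by splitting one subset with at least two products, then map any feasible \textbf{K+1-EMP} solution to a feasible \textbf{K-EMP} solution by summing the flows of the two new super-products while keeping $\truck$ and $\z$ fixed. In fact, your write-up is more explicit than the paper's (which defers the constraint-by-constraint verification to ``a similar reasoning'' as Theorem \ref{theorem1}), and your careful treatment of the supplier variable domains, via $\products^i \cap \products_k \neq \emptyset$ if and only if $\products^i \cap \products_k' \neq \emptyset$ or $\products^i \cap \products_k'' \neq \emptyset$, fills a detail the paper leaves implicit.
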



\section{Meta Partial Benders Decomposition}
\label{sec:dynamic}
\noindent

In the previous section, we established a condition wherein a \textbf{K-EMP} is equivalent to the original problem. It is noteworthy that using such a \textbf{K-EMP} as the master problem in a Benders decomposition-based algorithm would allow the algorithm to converge in a single iteration. However, it may require using a large number of super-products to meet this condition, making the resulting master problem computationally challenging to solve. Ultimately, the number of super-products used to formulate a \textbf{K-EMP} has both positive and negative impacts on the performance of a Benders decomposition-based algorithm. On the one hand, a larger number of super-products should yield a stronger master problem and allow the algorithm to converge in fewer iterations. On the other hand, a larger number of super-products should result in a master problem that is harder to solve and therefore the algorithm will spend more time executing each iteration. However, the magnitude of either impact cannot be accurately estimated before the execution of the resulting Benders decomposition-based algorithm. 

Thus, we propose an algorithm that executes a Benders decomposition-based algorithm on different master problems with different numbers of super-products. We refer to this algorithm as \textit{Meta Partial Benders Decomposition} (Meta-PBD). Meta-PBD operates in two phases. Phase I has a time limit set to $\timelimitphaseone$ units of time. It explores different areas of the original problem feasible region by dynamically changing the number of super-products used to formulate the master problem. Note that in Phase I, the number of super-products is limited by a threshold value $\K_{max}$ to ensure computational tractability. Phase II has a time limit set to $\timelimitphasetwo$ units of time and it aims to close the optimality gap. To do so, it determines a \textbf{K-EMP} that is equivalent to the original problem, initiates it with the best bounds found in Phase I, and solves it. We first present a clustering-based strategy used to partition the product set prior to the algorithm. We then present the first phase and the second phase of Meta-PBD. The whole algorithm is decribed in Algorithm \ref{alg:global_algo}.

\begin{algorithm}
  \small
  \DontPrintSemicolon
  \KwData{Maximum number of super-products, $\K_{max}$, Time limit on bounds improvement, $\timelimitlb$, Minimal improvement required for the bounds, $\imprlb\%$, Threshold on the number of master solutions explored, $\minmastersols$, Time limit for phase I, $\timelimitphaseone$, Time limit for phase II, $\timelimitphasetwo$}
  1-partition $\leftarrow$ whole product set $\products$ \;
  \For{$\K$ ranging from $1$ to $\K_{max}-1$}{
    Identify largest product subset from the $\K$-partition \;
    Partition it by applying the 2-medoids algorithm, using distance $1-m( p, p')$ for each $( p, p') \in \products$\;
    $\K+1$-partition $\leftarrow$  combine the obtained subsets with all other product subsets from the $\K$-partition unchanged \;
  }
  Launch phase I using the obtained partitions\;
  Launch phase II using the best bounds found in Phase I\;
  \KwResult{Final solution, UB}
  \caption{\label{alg:global_algo}%
    Meta Partial Benders Decomposition}
\end{algorithm}

\subsection{Partitioning the product set}
\label{subsec:partition}
\noindent

In the first phase, the number of super-products used to formulate the master problem can range from 1 to a threshold $\K_{max}$. In our implementation, the partitioning of the product set is based in part on the K-medoids  \cite{Jain:1988,Berkhin:2006} method, a greedy algorithm that partitions $N$ objects into $K$ clusters. Like K-means, K-medoids seeks to put objects into $K$ clusters such that the sum of distances from the objects to the centers of clusters to which they are assigned is minimized. Unlike K-means, in K-medoids the center of each cluster is one of the objects in that cluster.  

K-medoids requires a distance measure between each pair of objects. In the context of Meta-PBD, and due to Theorem \ref{theorem2}, we seek to partition product set into product subsets with high matching rates. As the K-medoids algorithm seeks to minimize total distance, we set a proximity measure between each pair of products $(\product, \product') \in \dot \products ^ 2$ as $\proximity(\product,\product') = 1 - \matching(\product,\product')$. 

For the sake of simplicity, we compute the partitions of the product set that may be used in Phase 1 in a preprocessing fashion. We start with $\K=1$ and we use an incremental approach to determine the different partitions of the product set. At each iteration, we build a $\K+1$-partition of the product set by partitioning the largest product subset of the current $\K$-partition into two product subsets, leaving the other product subsets (and resulting super-products) unchanged. The largest product subset is partitioned  by applying the 2-medoids algorithm. Given Theorem \ref{theorem3}, this strategy ensures that, for $(k_1, k_2) \in \{1,...,\K\}^2$, if $k_2>k_1$ the master problem \textbf{$K_2$-EMP} yields bounds at least as strong as those obtained solving the master problem \textbf{$K_1$-EMP}. This preprocessing phase terminates as a $\K$-partition of the product set for each value of $\K$ ranging from 1 to $\K_{max}$.

\subsection{Phase I}
\label{subsec:phaseI}
\noindent

The first phase iteratively explores different areas of the original problem feasible region and it aims to quickly determine high-quality solutions. At each iteration, it changes the number of super-products and it solves the resulting master problem. It uses solution progress criteria and an assessement of the master computational tractability to determine wether the master should be changed as well as whether the number of super-products should decrease or increase. Based on these decisions, and if the number of super-products can decrase/increase, it applies an integral bisection search to identify a more promising number of super-products. At each iteration, the master problem is solved using the Benders decomposition-based solution algorithm proposed by Belieres et al. \cite{Belieres:2019}. Although that approach is presented in the context of solving a master problem based on a single super-product, it can be easily extended to a master problem based on multiple super-products. 

The first phase is initiated with $\K=1,$ and it starts by solving the \textbf{1-EMP}. The reason behind this choice is that the \textbf{1-EMP} is the most tractable master problem, and thus a good candidate to quickly improve the initial bounds. At each iteration, the algorithm must (i) detect when the progress of the Benders decomposition-based algorithm for the current \textbf{K-EMP} has slowed and identify the reason for this limited progress and (ii) suggest a more adequate number of super-products. We next describe how the algorithm performs each step. 

The algorithm tracks whether the Benders decomposition-based algorithm applied with the current \textbf{K-EMP} was able to improve either the primal or dual bound on the optimal objective function value of the original problem. If $\timelimitlb$ units of time have passed in which neither the primal bound nor the dual bound have improved by at least $\imprlb$, the algorithm counts the number of integer master solutions that have been produced in the last $\timelimitlb$ units of time. If this number is lower than $\minmastersols$, the algorithm judges that the slow progress is due to a lack of computational tractability and it indicates that the number of super-products should decrease. On the other hand, if the number of integer master solutions produced is equal to or greater than $\minmastersols$, the algorithm judges that the current master problem should be strengthened and thus that the number of super-products should increase. As the number of super-products for the next iteration is obtained via an integral bisection search, before stopping the current solution the algorithm first determines if the interval to be bisected is not empty. In that case only, the solution is stopped and the next number of super-products is computed. Figures \ref{fig:bisection_search_1} and \ref{fig:bisection_search_2} illustrate two examples of bisection search with $\K_{max}=5$. 

\begin{figure}[htp]
\centering
\includegraphics[scale=0.66]{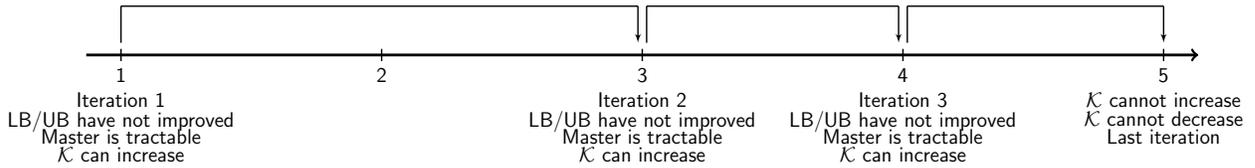}
\caption{First example of bisection search}\label{fig:bisection_search_1}
\end{figure}
\begin{figure}[htp]
\centering 
\includegraphics[scale=0.66]{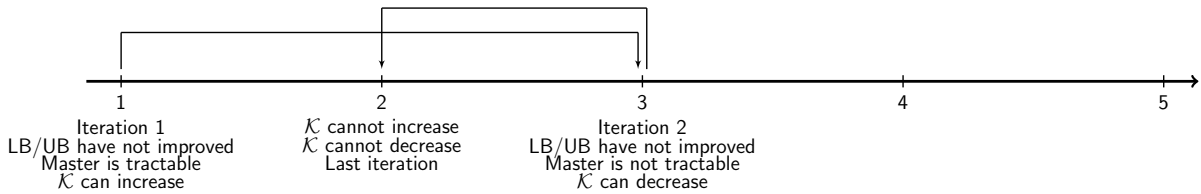}
\caption{Second example of bisection search}\label{fig:bisection_search_2}
\end{figure}

To determine if the number of super-products can decrease or increase, let $K^-$ be the largest number of super-products used previously that is smaller than $\K$ and let $K^+$ be the smallest number of super-products used previously that is larger than $\K$. Then, $\K$ cannot decrease if $\K=1$ or $K^-=\K-1$. Similarly, $\K$ cannot increase if $\K=\K_{max}$ or $K^+=\K+1$. Outside of these cases, the number of super-products to be considered at the next iteration is obtained by an integral bisection search. If the algorithm aims to increase (decrease) the number of super-products, we compute the midpoint between $\K$ and $K^+$ ($K^-$) and we round it up if it is not integer. The obtained value defines the number of super-products to be considered at the next iteration. Note that it may happen that the algorithm seeks to increase $\K$ while $\K<\K_{max}$ and no number of super-products larger than $\K$ was studied previously. In that case, the number of super-products to be considered at the next iteration is set to $\lceil \frac{\K_{max} - \K}{2} \rceil$. 

We note that at each iteration we initiate the current Benders decomposition-based algorithm with the best upper bound and the best lower bound found in previous iterations. In addition, we strengthen the \textbf{K-EMP} at a given iteration with all the Benders cuts generated in  previous iterations. As the Benders cuts \eqref{feasibilityConstraintSMP} and \eqref{optimalityConstraintSMP} only involve $y$ and $z$ variables, they can be re-used from one iteration to the next. Figure \ref{fig:meta-pbd} illustrates the steps taken at each iteration of Meta-PBD. Detailed pseudo-code for the first phase is presented in Algorithm \ref{alg:phase1}, which is in the Appendix (Section \ref{sec:appendix}). 

\begin{figure}[htp]
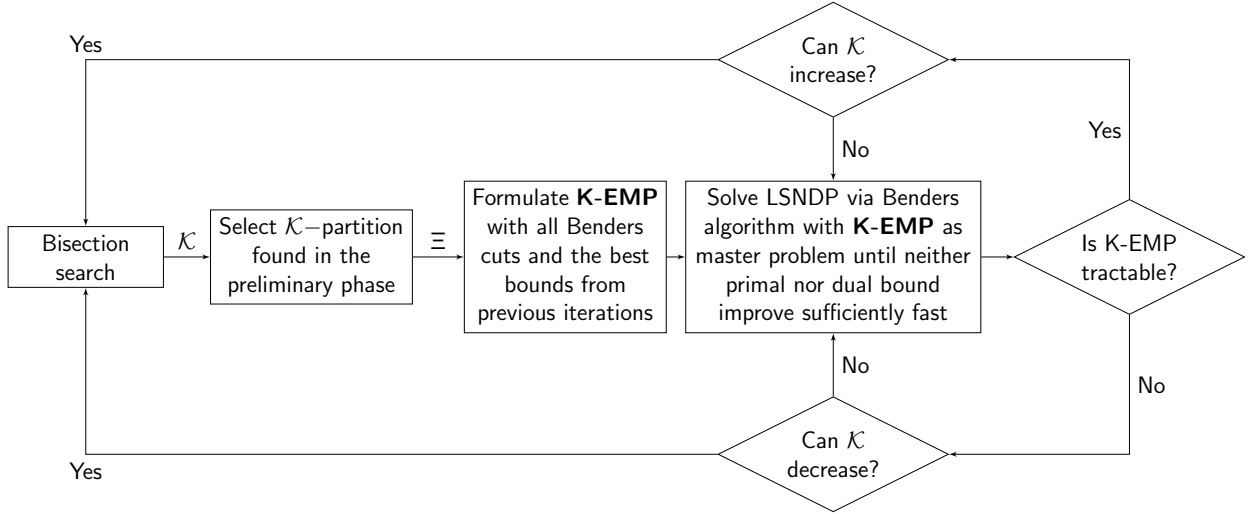

\centering
\includestandalone[scale=0.75]{flowchart}
\caption{Iteration of Phase I}\label{fig:meta-pbd}
\end{figure}

\subsection{Phase II}
\label{subsec:phaseII}
\noindent

In the second phase, we formulate a master problem that is equivalent to the original problem. To determine a partition of the product set that satisfies Theorem \ref{theorem2}, we start with $\K=1$ and we use an incremental approach. At each iteration, we identify a product subset of the current $\K$-partition with a matching rate lower than 1. We then build a $\K+1$-partition of the product set by partitioning the identified product subset into two product subsets, which is done by applying the 2-medoids algorithm. If no product subset is identified, the current $\K$-partition satisfies Theorem \ref{theorem2} and we can formulate the master problem.

The resulting master problem is solved using a generic branch-and-cut implementation rather than the Benders decomposition-based solution algorithm proposed by Belieres et al. \cite{Belieres:2019}. Indeed, since the master problem is equivalent to the original problem, all integer master solutions found in the branch-and-bound tree are feasible, such that using the valid inequalities and the heuristic solutions proposed by Belieres et al. \cite{Belieres:2019} do not bring added value. To reduce the branch-and-bound search tree and accelerate convergence. we initiate the branch-and-cut with the best upper bound and the best lower bound found in the first phase. Detailed pseudo-code for the second phase is presented in Algorithm \ref{alg:phase2}, which is in the Appendix (Section \ref{sec:appendix}).

\section{Computational study}
\label{sec:study}
\noindent

The goal of this computational study is to study the effectiveness of Meta-PBD at solving instances of the LSNDP. We first describe the instances used in this computational study. Specifically, we describe a set of random instances used to analyze the performance of Meta-BPD according to certain parameters, and a set of industrial instances used to assess the performance of our method on realistic cases. Second, we describe how this computational study was conducted. Next, we analyze how the number of super-products impacts the bounds produced by the master problem as well as the time required to solve the master problem. Finally, we study the performance of Meta-PBD on both the random instances and the industrial instances.

\subsection{Instances}
\label{subsec:instances}
\noindent

We first describe the set of random instances generated for this computational study. Then, we briefly present the set of industrial instances taken from \cite{Belieres:2020}.

\subsubsection{Random instances}
\label{subsubsec:random_instances}
\noindent

These instances are inspired by the operations of our industrial partner and produced by a random generator similar to the one described in \cite{Belieres:2019}. This random generator produces a static graph $\flatnetwork$ and a time dimension based on the following parameters: the size of the node-set $\flatnodes$, the connectivity radius $\alpha$, the number of days in the planning horizon $D$, and the number of time-periods per day $\Delta$. The connectivity radius is a threshold on how close two nodes must be to each other for transportation to be an option. See \cite{Belieres:2019} for details regarding how the network $\flatnetwork$ and the time dimension are constructed. Likewise, customer requests, vehicle capacities, and warehouse capacities are defined in a similar way to what is done in \cite{Belieres:2019}.

Our instances differ from those presented in \cite{Belieres:2019} as we generate products and assign products to suppliers in a different fashion. Namely, \cite{Belieres:2019} did not explicitly recognize the concept of product families. Thus, the generator used in this study accepts two more parameters: (i) the cardinality of the product families set $\families$, and, (ii) a supply probability $\productpercentage$. The generator randomly assigns each product to one product family and one, two, or three product families to each supplier. Then the generator randomly distributes products to the suppliers. Specifically, each supplier that manufactures a given product family has a probability $\productpercentage$ to offer each product in that product family. 
  
The random instances are based on combinations of the following parameter values: $|\flatnodes| = \{50\}$, $\alpha = \{10, 30\}$, $D = 30$ days, $\Delta = \{2, 3, 4\}$, $\families = \{7\}$, $|\products| = \{100\}$, and $\productpercentage = \{25\%, 50\%, 75\%\}$. In addition, due to the randomness of the instance generator, we generated 5 instances for each of the 18 possible combinations of parameter values, yielding 90 instances in total.

Recall that for a pair of products ($p_{i}, p_{j}),$ the matching rate $m(p_{i},p_{j})$ measures the fraction of suppliers that manufacture both products. Thus, matching rates are directly impacted by the value of the parameter $\productpercentage.$ As a result, for each generated instance we compute the matching rate of each product family, which indicates whether or not products of that family should be aggregated. We report in Table \ref{tbl:matching_per_pp}  the average product family matching rates, averaged over instances generated with the same value of $\productpercentage$. Not surprisingly, the larger the value of $\productpercentage$ the larger the resulting matching rate.

\begin{table}[htp]
\centering
\caption{Average matching rate of the product families}\label{tbl:matching_per_pp}
\begin{adjustbox}{max width=\textwidth}
\begin{tabular}{lrrr}
\hline
$\productpercentage$ & $25\%$ & $50\%$ & $75\%$ \\
\hline
Matching rate & 46.51\% & 57.42\% &  76.77\% \\
\hline
\end{tabular}
\end{adjustbox}
\end{table}

\subsubsection{Industrial instances}
\label{subsubsec:industrial_instances}
\noindent

These instances are taken from the article \cite{Belieres:2020} and they reflect the operations performed by a 3PL in the supply chain management of a French restaurant chain. These instances are constructed in part from real data provided by our industrial partner. These data include: geographic positions of the stakeholders, travel times between each pair of stakeholders, transportation costs, storage costs, product families manufactured by each supplier, customers demand schedules and product demand seasonalities. We refer the interested reader to \cite{Belieres:2020} for more details on the real data and/or the generation of the industrial instances. 

In this study, we focus on the second set of industrial instances described in \cite{Belieres:2020}. These instances are referred to as \textit{difficult} instances as they involve time-expanded networks of significantly larger scale than those involved in the first set of industrial instances. These instances are based on combinations of the following parameter values: $|\flatnodes| = \{67\}$, $\alpha = \{0, 20, 40, 60\}$, $D = 14$ days, $\Delta = \{2, 4, 6\}$, $\families = \{3\}$, $|\products| = \{131\}$, and 2 demand seasonalities (summer and winter).

\subsection{Setup of study}
\label{subsec:setup}
\noindent

To assess the effectiveness of \textbf{Meta-PBD}, we compare its performance against multiple benchmarks. For the first, \textbf{CPLEX}, the LSNDP is solved with CPLEX branch-and-cut solver with all parameter values left at their defaults. The next three benchmarks are direct applications of the Benders decomposition-based algorithm proposed in \cite{Belieres:2019} on different partial decompositions of the LSNDP. These approaches are referred to as \textit{static} PBD-approaches since the considered master problem solved remains unchanged through the optimization process. In the method \textbf{Single}, the master problem is formulated using a single super-product that aggregates all the products in the instance. On the other hand, the next two methods involve master problems based on multiple super-products. The method \textbf{Families} partitions the set of products based on the product families to which they belong. Thus, the resulting master problem is formulated with one super-product for each product family. The method \textbf{Random} randomly partitions the set of products into $\vert \mathcal{F} \vert$ subsets, and thus randomly determines how super-products and the resulting master problem are formulated. 

Each method is seeded with the same heuristic solution $(x_h, y_h)$, which is obtained by solving the linear relaxation of the LSNDP and then rounding up the value of each vehicle variable $y_{ij}^{tt'}$ in its optimal solution. All algorithms are coded in C++ and executed on an Intel Xeon E5-2695 processor with 16 GB of memory under Linux 16.04. Linear and integer programs were solved using Cplex 12.7. All algorithms are executed with a stopping criteria of a proven optimality gap of 1\%. Regarding parameters for Meta-PBD, we set $\K_{max}$ to 10, $\timelimitlb$ to $1/18$ of the global time limit, $\imprlb\%$ to $1\%$, $\minmastersols$ to 1, $\timelimitphaseone$ to $1/3$ of the global time limit, and $\timelimitphasetwo$ to $2/3$ of the global time limit. Note that these values were determined through a set of tuning experiments.

\subsection{Impact of $\K$ on \textbf{K-EMP}}
\label{subsec:impact_of_k}
\noindent

We first study the impact that the number of super-products has on the resolution of the resulting master problem, both in terms of the bound produced and of the computational time needed. Note that we use the random instances, for which 7 product families are involved.

For each instance and all values of $\K$ from 1 to 7, we apply the K-medoids algorithm to partition the set of products $\products$. Based on this partition, we formulate the resulting \textbf{K-EMP} and we solve its linear programming relaxation. In addition, for each instance we solve the linear programming relaxation of the LSNDP. For each linear programming relaxation (LPR) we consider two statistics: (i) the objective function value of the optimal solution, $r^{sol}_{LPR},$ and (ii) the computational time required for its solution, $r^{time}_{LPR}$.

To measure the impact that $\K$ has on the quality of the bounds produced by the master problem as well as on the time required for solving the master problem, for each value of $\K$ and for each instance we compute the following indicators:
$$ \text{lb-root-gap} = \frac{r^{sol}_{LSNDP} - r^{sol}_{K-EMP}}{r^{sol}_{LSNDP}} \hspace{1cm} \text{root-time-ratio} = \frac{r^{time}_{K-EMP}}{r^{time}_{LSNDP}}$$
lb-root-gap indicates the gap between the optimal solution of the \textbf{K-EMP} linear programming relaxation and the optimal solution of the LSNDP linear programming relaxation. root-time-ratio indicates the ratio between the computational time required for solving the \textbf{K-EMP} linear programming relaxation and the computational time required for solving the LSNDP linear programming relaxation. In Figure \ref{fig:root}, we display for different values of $\K$ the average lb-root-gap and root-time-ratio, averaged over all instances.

\begin{figure}[htp]
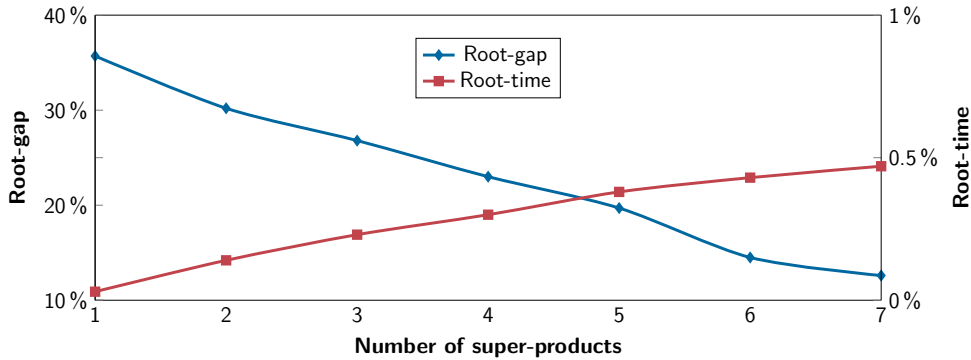

\centering
\includestandalone[scale=.7]{root}
\caption{Root-gaps and computational time ratios for the \textbf{K-EMP}}\label{fig:root}
\end{figure}

We see that the \textbf{K-EMP} based on a single super-product gives a root-gap of $35.7\%$. However the average \textit{root-gap} decreases as the number of super-products used to formulate the master increases. This confirms that considering more super-products in the master leads to better approximate the original problem. This improvement is significant, as the average \textit{root-gap} decreases by more than $20\%$ when  7 super-products are used to formulate \textbf{K-EMP}. As expected, the computational time for solving the master problem linear programming relaxation increases with the number of super-products. However, in the worst case, the linear programming relaxation of \textbf{K-EMP} is solved in less than $0.5\%$ of the time needed to solve the LSNDP root relaxation.

Belieres et al. \cite{Belieres:2019} propose three classes of valid inequalities for strengthening a \textbf{K-EMP} based on a single super-product. Since the same valid inequalities are used in this study, we ran a similar experiment to the one just described, albeit with these constraints added to the formulation of each \textbf{K-EMP}. We then recalculated the \textit{root-gap} and \textit{root-time} statistics in the same manner. We observed the same trends in those statistics as in Figure \ref{fig:root}. Namely, the larger the value of $\K,$ the smaller the value of \textit{root-gap} and the larger the value of \textit{root-time}. However, the magnitudes were much different, as the \textit{root-gap} equals $0.63\%$ for $K=1$ and $0.38\%$ for $K=7$. We thus observe that for $K=1$, the optimal solution of the \textbf{K-EMP} linear relaxation already provides a very tight bound on the LSNDP linear relaxation as the valid inequalities are added.

\subsection{Results obtained on the random instances}
\label{subsec:results_random}
\noindent

We analyze the results obtained on the random instances with a time limit of 3 hours. We first compare our approach to various benchmark methods and we computationally demonstrate that it strictly outperforms the algorithm proposed by Belieres et al. \cite{Belieres:2019}, i.e. \textbf{Single}. We then analyze the performance of our approach in more detail.

\subsubsection{Benchmarking performance of Meta-PBD}
\label{subsubsec:performance_random}
\noindent
We compare the performance of \textbf{Meta-PBD} variants with that of the following benchmark methods: \textbf{CPLEX}, \textbf{CPLEX-Benders}, \textbf{Single}, \textbf{Families} and \textbf{Random}. We do so based in part on the objective function value, $\text{UB}_{x}$, of the best primal solution produced by method x, as well as the strongest dual bound, $\text{LB}_{x}$, it produced. Let $\text{UB}_{Best}$ denote the best objective function value over all methods. Similarly, we let $\text{LB}_{Best}$ denote the best dual bound produced by all methods. For each instance and each method x, we compute two performance indicators:

$$ \text{gap}_{UB} = \frac{\text{UB}_{x} - \text{UB}_{Best}}{\text{UB}_{x}} \hspace{1cm} \text{gap}_{LB} = \frac{\text{LB}_{Best} - \text{LB}_{x}}{\text{LB}_{Best}}$$

Both indicators are non-negative and equal 0 if method $x$ produced the primal/dual solution with best objective function value over all methods.  We also compute the average optimality gap at termination for method $x$ over all instances as $(\text{UB}_{x}.- \text{LB}_{x})/\text{UB}_{x}$. 

In Table \ref{tbl:average} we present averages of the optimality gap and each performance indicator for each method. We also report the number of instances solved as well as $\text{nb.LB}_{Best}$ and $\text{nb.UB}_{Best}$, i.e. the number of instances for which method $x$ found the best lower/upper bound over all methods. Note that for one instance, the best upper bound was found by both \textbf{Meta-PBD} and \textbf{Single}. Best values are noted in bold.

\begin{table}[htp]
\centering
\caption{Comparing \textbf{META-PBD} to the benchmarks}\label{tbl:average}
 \begin{tabular}{lrrrrrr} \hline
 Method & Optimality Gap & Solved & $\text{gap}_{LB}$ & $\text{nb.LB}_{Best}$ & $\text{gap}_{UB} $ & $\text{nb.UB}_{Best}$   \\
\hline
 \textbf{CPLEX} & 10.47\% & 2 & 0.06\% & 1 & 7.17 & 2 \\
 \textbf{Families} & 5.33\% & 0 & 0.50\% & 0 & 2.09\% & 16 \\
 \textbf{Meta-PBD} & \textbf{3.06\%} & \textbf{11} & \textbf{0.00\%} & \textbf{88} & \textbf{0.23\%} & \textbf{65} \\
 \textbf{Single} & 5.02\% & 0 & 0.66\% & 1 & 1.60\% & 7 \\
 \textbf{Random} & 8.45\% & 0 & 0.64\% & 0 & 5.20\% & 1 \\  \hline
    \end{tabular}
 \end{table}

\textbf{Meta-PBD} significantly outperforms all the benchmarks regardless of the indicator considered. In particular, we observe that \textbf{Meta-PBD} provides primal solutions much better than those produced by the other benchmarks. It is also noteworthy that \textbf{Meta-PBD} provides strict improvements over the algorithm proposed by Belieres et al. \cite{Belieres:2019}, i.e. \textbf{Single}. Overall, our approach provides a gap at termination that is more than 2 percent better that of \textbf{Single}. Similarly, we observe that \textbf{Meta-PBD} solves 11 instances while \textbf{Single} never closes the optimality gap within $1\%.$ The only benchmark that solves some instances to optimality is \textbf{CPLEX}. Nevertheless, the performance of \textbf{CPLEX} is quite poor, as it yields the largest optimality gap on average. Although \textbf{CPLEX} produces strong dual bounds, it encounters difficulties on the primal side. Ultimately, all the Partial Benders Decomposition-based strategies provide better primal solutions than \textbf{CPLEX} overall.

We next turn our attention to the Partial Benders Decomposition-based strategies, i.e. \textbf{Meta-PBD} and the static PBD-approaches \textbf{Single}, \textbf{Families}, and \textbf{Random}. We first observe that all strategies produce strong dual bounds, and that the quality of these bounds does not vary significantly from one strategy to another. This result is highly anticipated since we observed in subsection \ref{subsec:impact_of_k} that, when valid inequalities are added, the linear relaxation of the master problem with a single super-product already provides a very tight bound on the LSNDP linear relaxation. As a result, the improvement of the bounds produced by the master problem is of small magnitude as the number of super-products increases. On the other hand, we observe greater differences in the quality of the primal solutions produced.

As \textbf{Random} and \textbf{Families} both involve 7 super-products, comparing these approaches in terms of primal solutions provides evidence for our hypothesis that how the product set is partitioned is critical for the resulting master problem to enable a Benders decomposition-based algorithm to perform well. The performance of \textbf{Single} is very close to that of \textbf{Families}. Overall, \textbf{Single} yields a better optimality gap and produces better primal solutions on average. Nevertheless, \textbf{Families} finds the best primal solution over all methods much more often than \textbf{Single}. This result suggests that the number of super-product to consider in a static PBD-approach depends on the nature of the instance solved. To confirm this hypothesis, we compare the performance \textbf{Single} and \textbf{Families} for different values of $\productpercentage$.

As noted, we generated instances for three different values of $\productpercentage$, the probability a supplier produces a product in a family it supplies. In addition, the larger the value of $\productpercentage$, the more likely the condition to which Theorem \ref{theorem2} holds, in which case the master problem based on product families is equivalent to the original problem. For each value of $\productpercentage$, we report in Table \ref{tbl:compare} the number of instances for which \textbf{Single} provides a better primal/dual solution than \textbf{Families}, and vice versa. Best values are noted in bold.

\begin{table}[htp]
\centering
\caption{Comparing the primal/dual solutions produced by \textbf{Families} and \textbf{Single}}\label{tbl:compare}
\begin{adjustbox}{max width=\textwidth}
\begin{tabular}{l|rr|rr}
\hline
&\multicolumn{2}{c}{Best primal solution} & \multicolumn{2}{c}{Best dual solution} \\
$\productpercentage$ & \textbf{Families} & \textbf{Single} & \textbf{Families} & \textbf{Single}\\
\hline
25\% & \textbf{18} & 12 & 13 & \textbf{17} \\
50\% & \textbf{26} & 4 & \textbf{16} & 14 \\
75\% & \textbf{30} & 0 & \textbf{20} & 10 \\
\hline
\end{tabular}
\end{adjustbox}
\end{table}

As anticipated, \textbf{Families} tends to perform better than \textbf{Single} as the value of $\productpercentage$ increases. This confirms that, in the case of a static PBD-approach, the number of super-products to consider depends greatly on the nature of the instance. In the context of multi-product supply chains, products are often already classified into different families, and such classifications should be used to determine the partition of the product set. Nevertheless, for a given supply chain there exist in practice multiple classifications of the products with different levels of granularity. For example, the family of fresh products could be decomposed into fruit products, vegetable products, and meat products. As a result, it is assumed that using different partitions of the product set with different levels of granularities can be beneficial, which motivates our Meta-PBD framework.

\subsubsection{Analyzing the performance of Meta-PBD}
\label{subsubsec:analyze_random}
\noindent

As observed in Table \ref{tbl:average}, \textbf{Meta-PBD} produces primal solutions significantly stronger than those computed by the static PBD-approaches. Thus, we investigate to understand why \textbf{Meta-PBD} outperforms the benchmarks. 

We assess the interest of changing dynamically the number of super-products in the first phase of \textbf{Meta-PBD}. To do so, we study the performance study of each iteration of \textbf{Meta-PBD}'s first phase. We report in Table \ref{tbl:iteration_distrib} the distribution of the number of iterations performed during the first phase of \textbf{Meta-PBD}. For example, column "3 iter." indicates the percentage of instances for which the Benders decomposition-based algorithm was executed three times during the first phase of \textbf{Meta-PBD}. 
We report in Table \ref{tbl:results_per_decomp} the improvement obtained in the primal bound and the dual bound during each iteration of \textbf{Meta-PBD}'s first phase. We note that since \textbf{Meta-PBD}'s first phase did not execute 5 iterations for all instances, for a given number of iterations, we average over instances wherein \textbf{Meta-PBD}'s first phase executed as many iterations.

\begin{table}[htp]
	\centering
	\caption{Distribution of instances by \# of iterations performed during the first phase of \textbf{Meta-PBD}} \label{tbl:iteration_distrib}
	\begin{tabular}{ccccc}
	\hline
 1 iter. & 2 iter. & 3 iter. & 4 iter. & 5 iter.  \\ \hline
 32.22\% & 34.44\% & 21.11\% & 11.11\% & 1.11\% \\
\hline
	\end{tabular}
\end{table}


\begin{table}[htp]
	\centering
	\caption{Performance per iteration of the first phase of \textbf{Meta-PBD}} \label{tbl:results_per_decomp}
	\begin{tabular}{ccc}
	\hline
        Iter.& LB Impr. & UB Impr. \\ \hline
		1 & 0.44\% & 8.54\% \\
		2 & 0.15\% & 3.52\% \\
		3 & 0.05\% & 1.30\% \\
		4 & 0.03\% & 0.06\% \\
		5 & 0.00\% & 0.00\% \\ \hline
	\end{tabular}
\end{table}

Most of the improvement on the primal and the dual bounds is performed during the first iteration. Nevertheless, while the improvement decrease with the number of iterations, we observe that the primal bound is significantly improved in both the second and the third iteration. For $80\%$ of the instances, the best primal solution obtained at the end of \textbf{Meta-PBD}'s first phase was computed during the last iteration.

We now compare the primal solutions obtained at the end of the first phase, i.e. \textbf{Meta-PBD-1h}, to those computed with the static PBD-approaches in the same amount of time, i.e. \textbf{Single-1h} and \textbf{Families-1h}. For each instance, we measure a primal gap between the solution computed by \textbf{Meta-PBD-1h} and those computed by \textbf{Single-1h} and \textbf{Families-1h}. A positive gap indicates that \textbf{Meta-PBD-1h} found a better primal solution than the considered benchmark. Overall, we observe that \textbf{Meta-PBD-1h} computes primal solutions $4.22\%$ better than \textbf{Families-1h} and $2.87\%$ better than \textbf{Single-1h}. This clearly demonstrates the added value of changing dynamically the number of super-products used to formulate the master problem.

A critical factor in the performance of a Benders decomposition-based algorithm is its ability to quickly solve master problems, which in turn yield subproblems that may be solved as part of a process for creating feasible solutions to the original problem. Thus, we report in Table \ref{tbl:subproblems} the average number of subproblems generated by \textbf{Meta-PBD-1h}, \textbf{Single-1h} and \textbf{Families-1h}, as well as the average percentage of those subproblems that are feasible. 

\begin{table}[htp]
	\centering
	\caption{Subproblem generation and feasibility after 1 hour of computation} \label{tbl:subproblems}
	\begin{tabular}{lrrr}
	\hline
& \textbf{Families-1h} & \textbf{Meta-PBD-1h} & \textbf{Single-1h} \\
	\hline
Average number of subproblems generated & 3.72 & 5.16 & 5.76 \\
\% subproblems that are feasible & 47.76\% & 46.55\% & 33.01\% \\
\% subproblems that are infeasible & 52.24\% & 53.45\% & 66.99\% \\
\hline
	\end{tabular}
\end{table}

This table demonstrates how the number of super-products used to construct the \textbf{K-EMP} impacts the number and quality of master problem solutions generated. As \textbf{Single-1h} solves the most computationally tractable master problem, it yields the greatest number of subproblems. However, as that master problem has the poorest approximation of the original problem, those subproblems are the least likely to be feasible amongst the three methods. Conversely, with a master problem that is stronger, but harder to solve, \textbf{Families-1h} generates the fewest number of subproblems. However, those subproblems are the most likely to be feasible amongst the three methods. \textbf{Meta-PBD-1h} achieves a balance between the other two methods. It generates nearly as many subproblems as \textbf{Single-1h}, but the percentage of subproblems that are feasible is nearly to that of \textbf{Families-1h}. 

We now illustrate in Figure \ref{fig:ub} the distribution of occurrences of primal solution improvement over the first hour of computation. The computational time of 3,600 seconds is divided into 10 intervals of 360 seconds. Thus, a value of 10 for the first interval indicates that the incumbent primal solution was improved 10 times within the 360 first seconds of computation. 

\begin{figure}[htp]
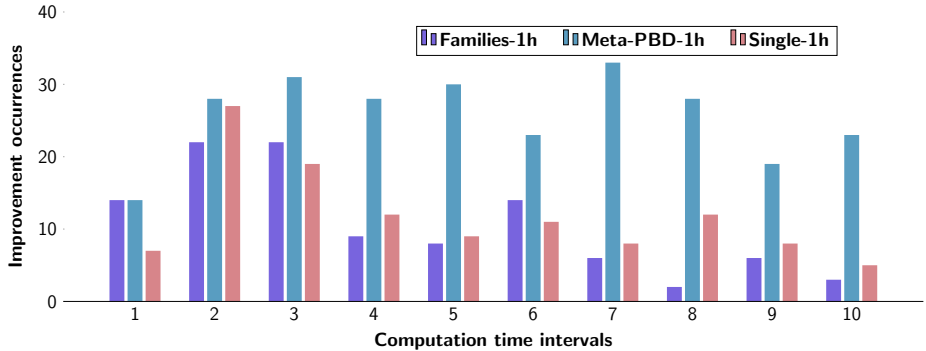

\centering
\includestandalone[scale=0.6]{ub}
\caption{Distribution of the occurrences of primal solution improvement after 1 hour of computation}\label{fig:ub}
\end{figure}

In Figure \ref{fig:ub}, we see that \textbf{Meta-PBD-1h} improves the incumbent throughout the optimization process. This is in contrast to \textbf{Single-1h} and \textbf{Families-1h}, both of which find the majority of the improved primal solutions early in their execution. Overall, \textbf{Meta-PBD-1h} improves the incumbent more often than \textbf{Single-1h} and \textbf{Families-1h} combined. Thus, we conclude that dynamically varying the subproblem information used to formulate the master problem enables to change the portion of the feasible region that is explored and allows to discover more promising solutions.

We now demonstrate the interest of \textbf{Meta-PBD}'s second phase, which formulates and solves a master problem equivalent to the original problem. To do so, we compare the results obtained with \textbf{Meta-PBD} to those obtained when performing the first phase alone with the same time limit, i.e. \textbf{Phase-I-3h}, or the seconde phase alone with the same time limit, i.e. \textbf{Phase-II-3h}. In Table \ref{tbl:compare_phases}, we indicate values for the same performance indicators as those reported in Table \ref{tbl:average}. Note that here, we use the best primal/dual bound found over \textbf{Meta-PBD}, \textbf{Phase-I-3h}, and \textbf{Phase-II-3h} to compute values for the performance indicators.

\begin{table}[htp]
\centering
\caption{Comparing \textbf{META-PBD} to \textbf{Phase-I-3h} and \textbf{Phase-II-3h}}\label{tbl:compare_phases}
 \begin{tabular}{lrrrrrr} \hline
 Method & Optimality Gap & Solved & $\text{gap}_{LB}$ & $\text{nb.LB}_{Best}$ & $\text{gap}_{UB} $ & $\text{nb.UB}_{Best}$   \\
\hline
 \textbf{Meta-PBD} & \textbf{3.06\%} & \textbf{11} & \textbf{0.00\%} & \textbf{62} & \textbf{0.40\%} & \textbf{53} \\
 \textbf{Phase-I-3h} & 3.71\% & 0 & 0.50\% & 1 & 0.58\% & 43 \\
 \textbf{Phase-II-3h} & 8.72\% & 8 & 0.01\% & 27 & 6.26\% & 6 \\  \hline
    \end{tabular}
 \end{table}
 
\textbf{Phase-I-3h} provides a significantly better average optimality gap at termination than \textbf{Phase-II-3h}, which can be attributed to the difference in the primal solutions computed. This result indicates that directly solving a master problem strengthened with a large amount of subproblem information is not an efficient strategy, as it yields a branch-and-bound search tree that is prohibitively large. On the other hand, \textbf{Phase-II-3h} manages to solve 8 instances to optimality, against 0 instances for \textbf{Phase-I-3h}. This demonstrates that, while strengthening the master problem with aggregated subproblem information allows to quickly compute high-quality primal solutions, having a loss of information compared to the original problem prevents to compute the optimal master solution. Our approach strikes a compromise as it significantly outperforms both benchmarks. Ultimately, \textbf{Meta-PBD} solves more instances than \textbf{Phase-II-3h} and it provides better primal solutions than \textbf{Phase-I-3h}. When executed after the first phase, the second phase improves the best primal bound by $1.06\%$ on average. Similarly, it improves the best dual bound by $0.59\%$ on average.

\subsection{Results obtained on the industrial instances}
\label{subsec:results_industrial}
\noindent

In this section, we assess how \textbf{Meta-PBD} compares with the best performing benchmark, \textbf{Single}, on the set of industrial instances with a maximum run-time of 3 hours. The instances, which are described by the name ``LSNDP\textunderscore{}$\alpha$\textunderscore{}$\Delta$\textunderscore{}S,'' vary according to three parameters: the connectivity radius ($\alpha$), the number of time-periods per day ($\Delta$), and the season considered (S). We set a maximum run-time of 5 hours for each run. In Table \ref{tbl:results_industrial}, we report the optimality gap, the dual solution, and the primal solution computed by \textbf{Meta-PBD} and \textbf{Single}. The best values are noted in bold. Note that none of the instances could be solved by either method within the time limit.

 \begin{table}[htp]
\begin{minipage}{0.49\textwidth}
\centering
\begin{adjustbox}{max width=0.9\textwidth}
    \begin{tabular}{l|rrr|rrr}  \hline
    &\multicolumn{3}{c}{\textbf{Meta-PBD}} & \multicolumn{3}{c}{\textbf{Single}} \\
    Instance & Gap & LB & UB & Gap & LB & UB \\ \hline
         LSNDP\textunderscore{}0\textunderscore{}2\textunderscore{}S & 2.20. & 112,069. & 114,585. & \textbf{2.00}. & \textbf{112,212}. & \textbf{114,506} \\
        LSNDP\textunderscore{}0\textunderscore{}2\textunderscore{}W & 2.29. & 94,867. & 97,090. & 2.45. & \textbf{94,902}. & 97,284 \\
        LSNDP\textunderscore{}0\textunderscore{}4\textunderscore{}S & 7.74. & 107,709. & 116,740. & \textbf{5.50}. & \textbf{107,857}. & \textbf{114,130} \\
        LSNDP\textunderscore{}0\textunderscore{}4\textunderscore{}W & 4.04. & 91,320. & 95,163. & \textbf{3.77}. & \textbf{91,342}. & \textbf{94,925} \\
        LSNDP\textunderscore{}0\textunderscore{}6\textunderscore{}S & 8.27. & 106,210. & 115,780. & 9.02. & \textbf{107,245}. & 117,884 \\
        LSNDP\textunderscore{}0\textunderscore{}6\textunderscore{}W & 10.85. & 89,941. & 100,887. & \textbf{9.86}. & \textbf{90,913}. & \textbf{100,855} \\
        LSNDP\textunderscore{}20\textunderscore{}2\textunderscore{}S & \textbf{4.56}. & \textbf{104,291}. & \textbf{109,270}. & 18.34. & 97,984. & 119,985 \\
        LSNDP\textunderscore{}20\textunderscore{}2\textunderscore{}W & \textbf{4.11}. & \textbf{88,562}. & \textbf{92,354}. & 20.60. & 81,652. & 102,832 \\
        LSNDP\textunderscore{}20\textunderscore{}4\textunderscore{}S & \textbf{7.53}. & \textbf{101,169}. & \textbf{109,413}. & 35.18. & 92,411. & 142,560 \\
        LSNDP\textunderscore{}20\textunderscore{}4\textunderscore{}W & \textbf{6.40}. & \textbf{85,464}. & \textbf{91,310}. & 23.75. & 76,590. & 100,441 \\
        LSNDP\textunderscore{}20\textunderscore{}6\textunderscore{}S & \textbf{27.17}. & \textbf{100,771}. & 138,356. & 32.58. & 87,284. & \textbf{129,466} \\
        LSNDP\textunderscore{}20\textunderscore{}6\textunderscore{}W & \textbf{7.79}. & \textbf{85,433}. & \textbf{92,655}. & 39.10. & 74,748. & 122,730 \\ \hline
           \end{tabular}
   \end{adjustbox}
 \end{minipage}
\begin{minipage}{0.49\textwidth}
\centering
\begin{adjustbox}{max width=0.9\textwidth}
    \begin{tabular}{l|rrr|rrr}   \hline
    &\multicolumn{3}{c}{\textbf{Meta-PBD}} & \multicolumn{3}{c}{\textbf{Single}} \\
    Instance & Gap & LB & UB & Gap & LB & UB \\ \hline
        LSNDP\textunderscore{}40\textunderscore{}2\textunderscore{}S & \textbf{4.73} & \textbf{101,917} & \textbf{106,976} & 52,52 & 83,237 & 175,318 \\
        LSNDP\textunderscore{}40\textunderscore{}2\textunderscore{}W & \textbf{4.28} & \textbf{86,585} & \textbf{90,460} & 30,94 & 71,663 & 103,774 \\
        LSNDP\textunderscore{}40\textunderscore{}4\textunderscore{}S & \textbf{8.36} & \textbf{98,095} & \textbf{107,046} & 67,49 & 81,312 & 250,086 \\
        LSNDP\textunderscore{}40\textunderscore{}4\textunderscore{}W & \textbf{8.46} & \textbf{83,392} & \textbf{91,095} & 58,37 & 65,025 & 156,210 \\
        LSNDP\textunderscore{}40\textunderscore{}6\textunderscore{}S & \textbf{10.20} & \textbf{97,708} & \textbf{108,812} & 56,62 & 72,382 & 166,844 \\
        LSNDP\textunderscore{}40\textunderscore{}6\textunderscore{}W & \textbf{10.43} & \textbf{82,926} & \textbf{92,579} & 59,47 & 62,174 & 153,406 \\
        LSNDP\textunderscore{}60\textunderscore{}2\textunderscore{}S & \textbf{4.59} & \textbf{101,338} & \textbf{106,209} & 36,04 & 81,467 & 127,375 \\
        LSNDP\textunderscore{}60\textunderscore{}2\textunderscore{}W & \textbf{4.34} & \textbf{86,032} & \textbf{89,932} & 33,03 & 70,320 & 105,004 \\
        LSNDP\textunderscore{}60\textunderscore{}4\textunderscore{}S & \textbf{5.68} & \textbf{98,128} & \textbf{104,034} & 41,87 & 77,019 & 132,501 \\
        LSNDP\textunderscore{}60\textunderscore{}4\textunderscore{}W & \textbf{7.44} & \textbf{83,083} & \textbf{89,761} & 50,64 & 61,115 & 123,811 \\
        LSNDP\textunderscore{}60\textunderscore{}6\textunderscore{}S & \textbf{25.31} & \textbf{97,717} & \textbf{130,822} & 76,97 & 71,333 & 309,785 \\
        LSNDP\textunderscore{}60\textunderscore{}6\textunderscore{}W & \textbf{8.17} & \textbf{82,982} & \textbf{90,362} & 54,71 & 60,461 & 133,509 \\ \hline
        \end{tabular}
   \end{adjustbox}
 \end{minipage}
 \caption{Comparing \textbf{Meta-PBD} to \textbf{Single} on the industrial instances}\label{tbl:results_industrial}
 \end{table}

As for the results provided in Table \ref{tbl:average}, one first observes that \textbf{Meta-PBD} strictly outperforms \textbf{Single}, especially as the size of the instance increases. Overall, \textbf{Meta-PBD} produces a better primal solution than \textbf{Single} for 19 out of the 24 instances, and a better dual solution than \textbf{Single} for 18 out of the 24 instances. The average optimality gap at termination is of $8.12\%$ for \textbf{Meta-PBD}, against $34.20\%$ for \textbf{Single}. This large difference is essentially due to the quality of the primal solutions computed, as overall, \textbf{Meta-PBD} determines transportation plans $20\%$ more cost-effective than \textbf{Single}.

\section{Conclusions and future work}
\label{sec:conclusion}
\noindent
In this paper, we studied the Logistics Service Network Design Problem (LSNDP), a transportation planning problem that arises in the management of supply chains. We proposed a solution algorithm based on the recently-proposed \textit{Partial Benders Decomposition} (PBD) technique, wherein information derived from subproblem(s) is used to reinforce the master problem solved in the context of a Benders decomposition-based algorithm. In the proposed approach, the master problem is strengthened with variables and constraints that model the need to route one or more ``super-products'' that are aggregations of a subset of products. 

Existing implementations of PBD formulate and use a single master problem to solve the original problem. We proposed a solution framework called \textit{Meta Partial Benders Decomposition} (Meta-PBD) that changes the master problem considered throughout its execution based on information regarding the progress of the optimization process. Through an extensive computational study, we demonstrated that Meta-PBD strictly outperforms partial Benders decomposition-based algorithms based on a single master problem. More specifically, by changing the master problem used during its execution, Meta-PBD diversifies the search, allowing it to find primal solutions significantly better than those computed by the benchmarks and to solve more instances to optimality. 

There are several avenues for enhancing Meta-PBD, both in the context of solving the LSNDP and other network design problems. For example, while Meta-PBD currently starts with a master problem based on a single super-product, we are considering enhancing it to potentially start with a master problem based on multiple super-products. Also, the concept of aggregating products/commodities at different levels of granularity can be applied to all multi-product/commodity network design problems. Therefore, another avenue of future work is to develop a Meta-PBD-type scheme for solving general multi-product/commodity network design problems.



\section*{Acknowledgements}
This work has been partially supported by the French National Research Agency through the Pi-Comodalit\'e project under the grant ANR-15-CE22-0012. This support is gratefully acknowledged.

\bibliographystyle{apalike}
\bibliography{biblio}

\section{Appendix}
\label{sec:appendix}
\noindent
In this appendix we present proofs of results discussed in the main body of this paper as well as detailed pseudo-code for the proposed Meta Partial Benders Decomposition algorithm.
\begin{theorem}
The $\K$-enhanced master problem, \textbf{K-EMP}, is a relaxation of the Logistics Service Network Design problem, LSNDP.
\end{theorem}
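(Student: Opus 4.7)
The plan is to show that \textbf{K-EMP} is a relaxation by exhibiting, for every feasible LSNDP solution $(x,y)$, a feasible \textbf{K-EMP} solution $(\bar x, \bar y, z)$ whose objective value does not exceed that of $(x,y)$. Since the two formulations share the vehicle variables $y$ and the fixed-cost term $\sum f_{ij} y_{ij}^{tt'}$, the construction to keep in mind is the natural product aggregation: set $\bar y = y$, for each super-product $\superproduct_k$ and arc $((i,t),(j,t')) \in \exptarcs \cup \expharcs$ define $\bar x^{\superproduct_k tt'}_{ij} = \sum_{p \in \products_k} x^{ptt'}_{ij}$ (with the convention that terms for which $x^{ptt'}_{ij}$ is not defined are omitted), and set $z$ equal to the total routing cost of $(x,y)$.

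First, I would check that the aggregation respects the variable domain restrictions of \textbf{K-EMP}. For arcs originating from a supplier $i$ with $\products^i \cap \products_k = \emptyset$, every term in the aggregating sum is undefined, and indeed $\bar x^{\superproduct_k tt'}_{ij}$ is not defined by \eqref{flowsDomainKEMP}, so the construction is consistent. Second, I would verify each constraint in turn: \eqref{flowConservationConstraintKEMP} follows from summing \eqref{flowConservationConstraintFULL} over $p \in \products_k$; \eqref{demandConstraintKEMP} follows from summing \eqref{demandConstraintFULL} over $p \in \products_k$ together with the definition $\superproductdemand = \sum_{p \in \products_k} \demand$; \eqref{warehouseCapacityConstraintKEMP} and \eqref{truckActualizationConstraintKEMP} follow from \eqref{warehouseCapacityConstraintFULL} and \eqref{truckActualizationConstraintFULL} respectively, since $\{\products_1,\dots,\products_\K\}$ is a partition and therefore $\sum_{\superproduct_k} \bar x^{\superproduct_k tt'}_{ij} = \sum_{p \in \products} x^{ptt'}_{ij}$.

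Next, I would handle the $z$-related constraints. Because arc costs $\ltcost,\lhcost$ do not depend on the product, $\sum_{\superproduct_k \in \superproducts} \ltcost \bar x^{\superproduct_k tt'}_{ij} = \ltcost \sum_{p \in \products} x^{ptt'}_{ij}$, and analogously for holding arcs, so constraint \eqref{zConstraintKEMP} holds with equality at our choice of $z$. For the Benders cuts \eqref{feasibilityConstraintSMP} and \eqref{optimalityConstraintSMP}, these are derived from dual extreme rays and extreme points of the LSNDP subproblem \textbf{SP}$(y)$. Since $x$ is feasible for \textbf{SP}$(\bar y)$ (with $\bar y = y$), \textbf{SP}$(\bar y)$ is feasible and has finite optimum at most equal to our chosen $z$; the feasibility cuts are therefore satisfied by $\bar y$, and the optimality cuts are satisfied by $(\bar y, z)$.

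Finally, a direct computation shows that the \textbf{K-EMP} objective at $(\bar x, \bar y, z)$ equals $\sum f_{ij} y_{ij}^{tt'} + z$, which by construction equals the LSNDP objective \eqref{objectiveFULL} at $(x,y)$. Hence every LSNDP-feasible solution lifts to a \textbf{K-EMP}-feasible solution of the same cost, and $\min \textbf{K-EMP} \leq \min \textbf{LSNDP}$, proving the relaxation claim. The only subtle point, and the step I would be most careful with, is the bookkeeping around undefined variables for suppliers $i$ with $\products^i \cap \products_k = \emptyset$; everything else is a straightforward summation argument exploiting the partition structure and the product-independence of the arc costs.
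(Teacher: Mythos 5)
Your proposal is correct and follows essentially the same route as the paper's proof: lift any feasible LSNDP solution $(\flow,\truck)$ to \textbf{K-EMP} by aggregating product flows within each subset $\products_k$ into super-product flows, setting $\z$ to the routing cost, and verifying each constraint by summation over the partition. If anything, you are slightly more thorough than the paper, since you explicitly check the dynamically added Benders cuts \eqref{feasibilityConstraintSMP}--\eqref{optimalityConstraintSMP} via feasibility of the subproblem at $\bar y = y$, a step the paper's proof leaves implicit.
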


\begin{proof}
We prove this claim by showing that any feasible solution for the LSNDP is also feasible for the \textbf{K-EMP} and has the same objective function value. To do so, we let $(\flow,\truck)$ be a feasible solution of the LSNDP. Let consider a solution $(\flow^{\superproducts}, \truck, \z)$ such that: 
$$ \flow^{\superproduct_k \time\time'}_{ij}=  \sum\limits_{\product \in \products_k} \flow^{\product\time\time'}_{ij},  \hspace{0.15cm} \forall ((i,\time),(j,\time')) \in \exptarcs \cup \expharcs,  \forall \superproduct_k \in \superproducts, \products^i \cap \products_k  \neq \emptyset$$ 
$$ \z = \sum\limits_{\tarc \in \exptarcs}   \sum\limits_{\superproduct_k \in \superproducts}  \ltcost \flow^{\superproduct_k \time\time'}_{ij}  +  \sum\limits_{\harc \in \expharcs}   \sum\limits_{\superproduct_k \in \superproducts}  \lhcost \flow^{\superproduct_k \time\time+1}_{ii} $$ 
It is easy to prove that this solution is feasible for the \textbf{K-EMP}. By construction, for each variable $\flow^{\product\time\time'}_{ij}$ in the LSNDP, $\product \in \products_k$, there is a corresponding variable $\flow^{\superproduct_k \time\time'}_{ij}$ in the \textbf{K-EMP}. We know that for each warehouse $(j, \time')$ and each product $\product \in \products$: $\sum\limits_{\tarc \in \exptarcs \cup \expharcs} \flow^{\product\time\time'}_{ij} - \sum\limits_{((j,\time'),(l,\time'')) \in \exptarcs \cup \expharcs} \flow^{\product\time'\time''}_{jl} = 0$. If we sum this expression on the products of $\products_k$, we obtain: $$\sum\limits_{\tarc \in \exptarcs \cup \expharcs}  \sum\limits_{\product \in \products_k} \flow^{\product\time\time'}_{ij} - \sum\limits_{((j,\time'),(l,\time'')) \in \exptarcs \cup \expharcs}  \sum\limits_{\product \in \products_k} \flow^{\product\time'\time''}_{jl} = 0 $$ $$ \Longleftrightarrow \sum\limits_{\tarc \in \exptarcs \cup \expharcs} \flow^{\superproduct_k \time\time'}_{ij} - \sum\limits_{((j,\time'),(l,\time'')) \in \exptarcs \cup \expharcs} \flow^{\superproduct_k \time'\time''}_{jl} = 0  $$
Therefore $(\flow^{\superproducts},\truck, \z)$ respects constraints \eqref{flowConservationConstraintKEMP}. As $(\flow,\truck)$ respects constraints \eqref{demandConstraintFULL}-\eqref{warehouseCapacityConstraintFULL}-\eqref{truckActualizationConstraintFULL}, it is trivial to demonstrate $(\flow^{\superproducts},\truck, \z)$ also respects constraints \eqref{demandConstraintKEMP}-\eqref{warehouseCapacityConstraintKEMP}-\eqref{truckActualizationConstraintKEMP}. By construction of $\z$, $(\flow^{\superproducts},\truck, \z)$ respects constraint \eqref{zConstraintKEMP} which makes it an admissible solution for the \textbf{K-EMP}. Let $Q(\flow,\truck)$ be the objective value of $(\flow,\truck)$:
$$ Q(\flow,\truck) \hspace{0.1cm}  = \hspace{0.1cm} \sum\limits_{\tarc \in \exptarcs}     \fcost \truck^{\time\time'}_{ij}  +   \sum\limits_{\tarc \in \exptarcs} \sum\limits_{\product \in \products}    \ltcost \flow^{\product\time\time'}_{ij} +   \sum\limits_{\harc \in \expharcs}   \sum\limits_{\product \in \products} \lhcost \flow^{\product\time\time+1}_{ii}  $$ 
$$ = \hspace{0.1cm} \sum\limits_{\tarc \in \exptarcs}     \fcost \truck^{\time\time'}_{ij}  +   \sum\limits_{\tarc \in \exptarcs} \sum\limits_{k=1}^{\K} \sum\limits_{\product \in \products_k}   \ltcost \flow^{\product\time\time'}_{ij} +   \sum\limits_{\harc \in \expharcs}  \sum\limits_{k=1}^{\K} \sum\limits_{\product \in \products_k} \lhcost \flow^{\product\time\time+1}_{ii} \hspace{0.1cm}  $$
$$ = \hspace{0.1cm}  \sum\limits_{\tarc \in \exptarcs}     \fcost \truck^{\time\time'}_{ij}   +   \sum\limits_{\tarc \in \exptarcs}   \sum\limits_{\superproduct_k \in \superproducts}  \ltcost \flow^{\superproduct_k \time\time'}_{ij}  +  \sum\limits_{\harc \in \expharcs}   \sum\limits_{\superproduct_k \in \superproducts}  \lhcost \flow^{\superproduct_k \time\time+1}_{ii} \hspace{0.1cm} $$ $$  = \hspace{0.1cm}  \sum\limits_{\tarc \in \exptarcs}     \fcost \truck^{\time\time'}_{ij} + \z \hspace{0.1cm}  = \hspace{0.1cm}  Q(\flow^{\superproducts},\truck, \z) $$
Solution $(\flow^{\superproducts_k}, \truck, \z)$ that replicates solution $(\flow,\truck)$ by an aggregation of flows, is feasible to the \textbf{K-EMP}. The two solutions have an identical objective function value. Thus, \textbf{K-EMP} is a relaxation of LSNDP.
\end{proof}

We next present the proof that in the special case that given a $\K$-partition of the products $\{ \products_1,...,\products_\K \}$, if for all subsets of $\products_k$, each supplier that provides any product $\product \in \products_k$ can also provide all other products included in $\products_k$, then all product subsets of the $\K$-partition can be aggregated to yield a \textbf{K-EMP} that  is equivalent to the original problem.
\begin{theorem}
Let $\{ \products_1,...,\products_\K  \}$ be a $\K$-partition of the products, and $\superproduct_k \in \superproducts, k \in \{1,...,\K\},$ the associated super-products. If for each $k \in \{1,...,\K\}$, each supplier $s \in \suppliers$ that provides any product $\product \in \products_k$ can also provide all other products included in $\products_k$, then the \textbf{K-EMP} is equivalent to the LSNDP.
\end{theorem}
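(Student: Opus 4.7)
The plan is to combine Theorem~\ref{theorem1}, which gives that K-EMP is a relaxation of LSNDP, with its converse under the supplier-coverage hypothesis: every feasible K-EMP solution $(\flow^{\superproducts},\truck,\z)$ can be disaggregated into a feasible LSNDP solution whose objective value is no larger. Equivalence of the two optimal values then follows. First I would fix such a K-EMP solution and, for each class $k\in\{1,\dots,\K\}$, apply a standard path decomposition to the super-product flow $\flow^{\superproduct_k}$ on the time-expanded DAG $\expnetwork$. This expresses $\flow^{\superproduct_k}$ as a nonnegative combination of simple paths from supplier-time nodes $(\supplier,\tau)$ to customer-time nodes $(\customer,\time)$; any closed holding-arc circulations can be deleted without cost increase or demand violation since the time-expanded network is acyclic.

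Next I would disaggregate each path-unit by product. For each customer node $(\customer,\time)$ the total incoming $\superproduct_k$ mass is some $F\geq\sum_{\product\in\products_k}\d^\product_{\customer\time}$, so I can partition it into per-product parcels of mass exactly $\d^\product_{\customer\time}$ for each $\product\in\products_k$, routing any surplus to a single designated product in $\products_k$. Summing the path contributions carrying label $\product$ along each arc then defines candidate variables $\flowstvariables$. Flow conservation at warehouses follows directly from the path decomposition; demand constraints \eqref{demandConstraintFULL} hold by construction; and since $\sum_{\product\in\products_k}\flowstvariables=\flow^{\superproduct_k \time\time'}_{ij}$ on every arc, the storage constraints \eqref{warehouseCapacityConstraintFULL} and the capacity constraints \eqref{truckActualizationConstraintFULL} are inherited from the K-EMP. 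The flow cost of the disaggregated solution equals the super-product flow cost appearing in the right-hand side of \eqref{zConstraintKEMP}, which is at most $\z$; combined with the unchanged truck term this yields an LSNDP objective value at most equal to that of the K-EMP solution.

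The crux---and the only place where the hypothesis is truly invoked---is in verifying that each path assigned to product $\product\in\products_k$ is admissible as a $\product$-flow, i.e. that its originating supplier $\supplier$ lies in $\suppliers^\product$. By construction of the K-EMP, a $\superproduct_k$ flow variable at a supplier exists only when $\products^\supplier\cap\products_k\neq\emptyset$; the hypothesis strengthens this to $\products_k\subseteq\products^\supplier$, so $\supplier$ provides every $\product\in\products_k$ and any product labelling of the path is valid. This ensures $\flowstvariables$ is only nonzero on arcs where it is defined in the LSNDP. The remaining subtlety I would address is that the path decomposition must be carried out for possibly non-integer nonnegative flows, but this is standard for acyclic flows. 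Together with Theorem~\ref{theorem1}, this establishes equivalence.
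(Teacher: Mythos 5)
Your proposal is correct and follows essentially the same route as the paper's proof: combine Theorem~\ref{theorem1} with a path decomposition of each super-product flow on the acyclic time-expanded network, partition each customer's incoming $\superproduct_k$ flow into per-product parcels of size $\d^\product_{\customer\time}$, and invoke the supplier-coverage hypothesis exactly where you do---to validate the product relabelling at each path's supplier origin. Your explicit treatment of demand surplus and of the inequality in \eqref{zConstraintKEMP} tidies up steps the paper states with equalities, but the argument is the same.
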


\begin{proof}
We proved in subsection \ref{subsec:reinforce} that the \textbf{K-EMP} is a relaxation of the LSNDP. We demonstrate now that, in the case stated above, the LSNDP is a relaxation of the \textbf{K-EMP}. The premise of the proof is as follows. A solution of the \textbf{K-EMP} can be seen as a set of paths transporting super-products from suppliers to customers. More specifically, for each super-product $\superproduct_k$, paths from suppliers of $\superproduct_k$ to customer $(c,t)$ transit  $\D^{\superproduct_k}_{\customer\time}$ units of $\superproduct_k$. Thus, for each customer, the incoming flow of $\superproduct_k$ can be divided into $|\products_k|$ parts of quantity $\d^\product_{\customer\time}$ each, $p \in \products_K$. By converting each part into a flow of the corresponding product $p$, one can construct a solution for the LSNDP. Since each suppliers of $\superproduct_k$ provide all products of $\products_k$, the obtained solution is feasible for the LSNDP.

Let $(\flow^{\superproducts}, \truck, \z)$ be a feasible solution of the \textbf{K-EMP}. Let $\paths$ be the set of paths from suppliers to customers in $\expnetwork$. By definition the flow of super-products $\flow^\superproducts$ originates from suppliers, ends at customers, and respects flow conservation at each warehouse. Therefore we can decompose the flow of super-product $\flow^\superproducts$ into paths of $\Lambda$, and obtain an equivalent path-solution $\pathflow^\superproducts$. We denote $\pathflow_\path^{\superproduct_k}$ the quantity of $k^{th}$ super-product transported along a path $\path \in \paths$.

Let $\paths_{\customer\time}$ be the set of paths from suppliers to customer $(\customer, \time)$. As $(\pathflow^{\superproducts},\truck, \z)$ respects the \textbf{K-EMP} demand constraint, for each super-product $\superproduct_k$ the sum of flows along paths of $\paths_{\customer\time}$ sustains $(\customer, \time)$ demand of $\superproduct_k$, i.e. $\sum\limits_{\path \in \paths_{\customer\time}}  \pathflow_\path^{\superproduct_k}  = \D^{\superproduct_k}_{\customer\time} = \sum\limits_{\product \in \products_k}  \d^{\product}_{\customer\time}.$ Thus, for each a customer $(\customer, \time)$, we can partition its incoming path-flow of super-product $\superproduct_k$ into $|\products_k|$ parts $\{\dot \pathflow^1,...,\dot  \pathflow^{|\products_k|}  \}$ transiting $\d^\product_{\customer\time}$ units of $\superproduct_k$ each, $\product \in \products_k$. As a result, for each product $\product \in \products_k$ the total flow along paths in the $p^{th}$ subset of the flow-partition equals $\d^\product_{\customer\time}$, i.e. $\sum\limits_{\path \in \paths_{\customer\time}} \dot  \pathflow^{\product}_{\lambda} = \d^\product_{\customer\time}$. In addition, for each path $\path \in \paths_{\customer\time}$, the sum of flows over the flow-partition equals the total flow of super-product $\superproduct_k$, i.e. $\sum\limits_{\product \in \products_k} \dot  \pathflow^{\product}_{\lambda} = \pathflow^{\superproduct_k}_{\lambda}$.

Based on the flow-partition of $\pathflow^\superproducts$, we construct a path-solution $\pathflow$ for the LSNDP. We denote $\pathflow_\path^{\product}$ the quantity of product $\product$ transiting along path $\path$. For each customer $(\customer, \time)$, each $k \in \{1,...,\K\}$, and each product $\product \in \products_k$, if $ \dot  \pathflow^{\product}_{\lambda} > 0$ then the supplier of origin manufactures $\superproduct_k$ in the \textbf{K-EMP}. By hypothesis, the supplier of origin manufactures all products of $\products_k$ in the original problem. As a result, $\forall \product \in \products_k$ we can set any value for $\pathflow_{\path}^{\product}$. Thus, for each customer $(\customer, \time)$, each $k \in \{1,...,\K\}$, each path $\path \in \paths_{\customer\time}$, and each product $\product \in \products_k$, we set $\pathflow_{\path}^{\product}$ value to $ \dot \pathflow^{\product}_{\lambda}$.

Let $\flow$ be the flow-solution equivalent to path-solution $\pathflow$. We now demonstrate that $(\flow,\truck)$ is feasible for the LSNDP.  By definition, on a path, flow conservation is ensured. Thus, flow-solution $\flow$ respects constraints \eqref{flowConservationConstraintFULL}. For each customer $(\customer, \time) \in \customers_\timehorizon$, each $k \in \{1,...,\K\}$, and each product $\product \in \products_k$ we have $\sum\limits_{\path \in \paths_{\customer\time}} \pathflow^{\product}_{\lambda} = \d^\product_{\customer\time}$. By extension, the associated flow-solution $\flow$ respects demand constraints \eqref{demandConstraintFULL}. As $(\flow^\superproducts, \truck, \z)$ is a solution of the \textbf{K-EMP}, vehicle allocation $\truck$ is sufficient to route $\pathflow^\superproducts$. For each customer $(\customer, \time)$, each $k \in \{1,...,\K\}$, and each path $\path \in \paths_{\customer\time}$, $\sum\limits_{\product \in \products_k} \pathflow^{\product}_{\path} = \pathflow^{\superproduct_k}_{\path}$ therefore vehicle allocation $\truck$ is sufficient to route $\pathflow$, and so $\flow$. Thus, $(\flow, \truck)$ respects constraints \eqref{truckActualizationConstraintFULL}, and is a feasible solution to the LSNDP.

Let $(\flow^\superproducts, \truck, \z)$ objective value be: $$ Q(\flow^\superproducts, \truck, \z) \hspace{0.1cm}  = \hspace{0.1cm}  \sum\limits_{\tarc \in \exptarcs}   \fcost \trucksvariables + \z$$
Due to constraints \eqref{zConstraintKEMP} we have: 
$$ Q(\flow^\superproducts, \truck, \z) \hspace{0.1cm}  = \hspace{0.1cm}  \sum\limits_{\tarc \in \exptarcs}   \fcost \trucksvariables + \sum\limits_{\tarc \in \exptarcs} \sum\limits_{k=1}^{\K}  \ltcost \flowstkvariablesaggregated  +  \sum\limits_{\harc \in \expharcs}  \sum\limits_{k=1}^{\K}  \lhcost \flowshkvariablesaggregated $$
By partitioning, for each $k \in \{1,...,\K\}$, for each product $\product \in \products_k$ and for each path $\path \in  \paths_{\customer\time}$: $\sum\limits_{\product \in \products_k} \pathflow^{\product}_{\lambda} = \pathflow^{\superproduct_k}_{\lambda}$. Regarding flow-solutions $\flow$ and $\flow^\superproducts$ associated to $\pathflow$ and $\pathflow^\superproducts$, that means on each arc, the sum of flows over the products of $\products_k$ equals the flow of super-product $\superproduct_k$, i.e. $\sum\limits_{\product \in \products_k} \flow^{\product\time\time'}_{ij} = \flow^{\superproduct_k \time\time'}_{ij}.$ Therefore: $$ Q(\flow^\superproducts, \truck, \z) \hspace{0.1cm}  = \sum_{\tarc \in \exptarcs}     \fcost \trucksvariables  \hspace{0.2cm} +   \sum_{\tarc \in \exptarcs}  \sum\limits_{k=1}^{\K} \sum\limits_{\product \in \products_k} \ltcost \flowstvariables \hspace{0.2cm} + \sum_{\harc \in \expharcs} \sum\limits_{k=1}^{\K} \sum\limits_{\product \in \products_k}  \lhcost \flowshvariables $$

$$ = \hspace{0.1cm}  \sum_{\tarc \in \exptarcs}     \fcost \trucksvariables  \hspace{0.2cm} + \sum_{\tarc \in \exptarcs}  \sum_{\product \in \products}    \ltcost \flowstvariables \hspace{0.2cm} + \sum_{\harc \in \expharcs} \sum_{\product \in \products}   \lhcost \flowshvariables   = Q(\flow, \truck) $$

Solution $(\flow,\truck)$ that replicates solution $(\flow^\superproducts, \truck, \z)$ by partitioning of flows, is feasible to the LSNDP. The two solutions have an identical objective function value. Thus, the LSNDP is a relaxation of the \textbf{K-EMP}. As the \textbf{K-EMP} is also a relaxation of the LSNDP, we demonstrated that the \textbf{K-EMP} and the LSNDP are equivalent if for each $k \in \{1,...,\K\}$, each supplier $s \in \suppliers$ that provides any product $\product \in \products_k$ can also provide all other products included in $\products_k$.
\end{proof}

We next present the proof that given a \textbf{K-EMP} for a given $\K$, one can always create a \textbf{K-EMP} based on $\K$ super-products that is at least as strong. 

\begin{theorem}
Given a $\K$-enhanced master problem, $\K \in \{1,...,|\products|-1\}$, there always exist a $\K+1$-enhanced master problem such that \textbf{K-EMP} is a relaxation of \textbf{K+1-EMP}
\end{theorem}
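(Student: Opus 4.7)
The plan is to exhibit, in constructive fashion, a specific $(\K+1)$-partition whose associated master problem dominates the given one. Since $\K < |\products|$, the pigeonhole principle guarantees that at least one subset $\products_i$ in the given $\K$-partition $\{\products_1,\dots,\products_\K\}$ satisfies $|\products_i| \geq 2$. I would pick any such $\products_i$ and split it into two nonempty subsets $\products_i^{(1)}$ and $\products_i^{(2)}$ with $\products_i^{(1)} \cup \products_i^{(2)} = \products_i$ and $\products_i^{(1)} \cap \products_i^{(2)} = \emptyset$. The resulting collection $\{\products_1,\dots,\products_{i-1},\products_i^{(1)},\products_i^{(2)},\products_{i+1},\dots,\products_\K\}$ is a $(\K+1)$-partition of $\products$ and defines the candidate \textbf{K+1-EMP}.

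Next, following the template used in the proof of Theorem 1, I would show that every feasible solution of this \textbf{K+1-EMP} can be converted into a feasible solution of the given \textbf{K-EMP} with the same objective value. Concretely, given a feasible solution $(x^{\superproducts'}, y, z)$ to the \textbf{K+1-EMP}, I define a solution $(x^{\superproducts}, y, z)$ to the \textbf{K-EMP} by leaving $y$ and $z$ unchanged, leaving $x^{\superproduct_j \time\time'}_{ij} := x'^{\superproduct_j \time\time'}_{ij}$ for every super-product $\superproduct_j$ with $j \neq i$, and defining the flow of the merged super-product $\superproduct_i$ by
\begin{align*}
x^{\superproduct_i \time\time'}_{ij} \;:=\; x'^{\superproduct_i^{(1)} \time\time'}_{ij} + x'^{\superproduct_i^{(2)} \time\time'}_{ij},
\end{align*}
on every arc where these variables are defined (with the convention that an undefined term contributes zero). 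Note that the domain of $x^{\superproduct_i \time\time'}_{ij}$ is consistent with the construction, since $\products^i \cap \products_i \neq \emptyset$ if and only if $\products^i \cap \products_i^{(1)} \neq \emptyset$ or $\products^i \cap \products_i^{(2)} \neq \emptyset$.

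I would then verify feasibility constraint by constraint, essentially reusing the same linearity argument as in Theorem 1. Summing the flow conservation constraints \eqref{flowConservationConstraintKEMP} for $\superproduct_i^{(1)}$ and $\superproduct_i^{(2)}$ yields the corresponding constraint for $\superproduct_i$; summing the demand constraints \eqref{demandConstraintKEMP} produces $\D^{\superproduct_i}_{c\time} = \D^{\superproduct_i^{(1)}}_{c\time} + \D^{\superproduct_i^{(2)}}_{c\time}$ on the right-hand side, which holds by construction of super-product demands; the warehouse capacity \eqref{warehouseCapacityConstraintKEMP} and truck capacity \eqref{truckActualizationConstraintKEMP} constraints, which already sum over all super-products in both formulations, are preserved since the total super-product flow on each arc is unchanged by the merger. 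The constraint \eqref{zConstraintKEMP} bounding $z$ becomes an equality after aggregation because $\ltcost$ and $\lhcost$ do not depend on the super-product index, so the same $z$ remains feasible and the objective value is identical. The Benders cuts \eqref{feasibilityConstraintSMP} and \eqref{optimalityConstraintSMP} involve only $y$ and $z$, which are untouched.

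The main obstacle will be bookkeeping rather than mathematical depth: one has to be careful about variables that are defined in the \textbf{K+1-EMP} but whose counterpart in the \textbf{K-EMP} need not exist (and vice versa) because of the supplier-specific restrictions $\products^i \cap \products_k \neq \emptyset$. Checking that the aggregation respects these domain conditions, and that the right-hand sides of demand constraints behave additively, is the only substantive verification. Once that is dispatched, the conclusion is immediate: any \textbf{K+1-EMP} feasible solution yields a \textbf{K-EMP} feasible solution of equal cost, so the optimal value of the \textbf{K-EMP} is at most that of the \textbf{K+1-EMP}, i.e.\ \textbf{K-EMP} is a relaxation of \textbf{K+1-EMP}.
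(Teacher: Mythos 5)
Your proposal is correct and follows essentially the same route as the paper: split one subset of the $\K$-partition with at least two products into two nonempty parts, then map any feasible \textbf{K+1-EMP} solution to a \textbf{K-EMP} solution of equal cost by summing the flows of the two refined super-products, exactly the aggregation argument the paper invokes via Theorem \ref{theorem1}. Your version is in fact slightly more explicit than the paper's (which defers the constraint-by-constraint check to ``similar reasoning''), notably in handling the supplier domain conditions and the additivity of super-product demands.
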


\begin{proof}
We demonstrate that for any \textbf{K-EMP} there exists a \textbf{K+1-EMP} such that any feasible solution for the \textbf{K+1-EMP} is also feasible for the \textbf{K-EMP}, and has the same objective function value. For this, we refine the $\K$-partition associated to the \textbf{K-EMP} by dividing one of its subsets into two parts.

Let consider a \textbf{K-EMP} based on a $\K$-partition of the product set: $\{ \products_1,...,\products_\K \}$. Subsets are sorted in ascending order with respect to their number of elements; thus the last subset $\products_\K$ contains at least two products. Let $\{\products^{'}_1,...,\products^{'}_{\K+1} \}$ be a $\K+1$-partition of the product set, such that $\products_i = \products^{'}_i,$ $\forall i \in \{1,...,\K-1\}$, and $\products^{'}_{\K} \cup \products^{'}_{\K+1} = \products_{\K},$ $\products^{'}_{\K} \neq \emptyset,$ $\products^{'}_{\K+1} \neq \emptyset$. 

In the resulting \textbf{K+1-EMP}, super-product $\superproduct^{'}_k$ is equivalent to super-product $\superproduct_k$ of the \textbf{K-EMP}, $\forall k \in \{1,...,\K-1 \}.$ Thus, for each $k \in \{1,...,\K-1 \}$ and for each variable $\flow^{\superproduct^{'}_k\time\time'}_{ij}$ in the \textbf{K+1-EMP}, there is a corresponding variable $\flow^{\superproduct_k \time\time'}_{ij}$ in the \textbf{K-EMP}. In addition, as super-products $\superproduct^{'}_{\K}$ and $\superproduct^{'}_{\K+1}$ include products in $\products_{\K}$, for each variable $\flow^{\superproduct^{'}_{\K}\time\time'}_{ij}$ and each variable $\flow^{\superproduct^{'}_{\K+1}\time\time'}_{ij}$ in the \textbf{K+1-EMP}, there is a corresponding variable $\flow^{\superproduct_{\K} \time\time'}_{ij}$ in the \textbf{K-EMP}.

Let $(\flow^{\superproducts^{'}}, \truck, \z)$ be a feasible solution for the \textbf{K+1-EMP}. Let consider a solution $(\flow^{\superproducts}, \truck, \z)$ for the \textbf{K-EMP}, such that the flow value of super-product $\superproduct_k$ is similar to the flow value of super-product $\superproduct^{'}_k$ in the \textbf{K+1-EMP}, $\forall k \in \{1,...,\K-1 \},$ and the flow value of super-product $\superproduct_{\K}$ aggregates the flow values of super-products $\superproduct^{'}_{\K}$ and $\superproduct^{'}_{\K+1}$ in the \textbf{K+1-EMP}. By using a similar reasonning to that in the proof of Theorem \ref{theorem1}, it is trivial to demonstrate that solution $(\flow^{\superproducts}, \truck, \z)$ is feasible for the \textbf{K-EMP}, and has the same objective function value that $(\flow^{\superproducts^{'}}, \truck, \z)$. Thus, \textbf{K-EMP} is a relaxation of \textbf{K+1-EMP}.
\end{proof}

\begin{algorithm}
  \small
  \DontPrintSemicolon
  \KwData{Maximum number of super-products, $\K_{max}$, Time limit on bounds improvement, $\timelimitlb$, Minimal improvement required for the bounds, $\imprlb\%$, Threshold on the number of master solutions explored, $\minmastersols$, $\K$-partitions of the product set for $\K$ ranging from $1$ to $\K_{max}$, Time limit, $\timelimitphaseone$}
  $\K=1$ \;
  $K^- = \emptyset$ and $K^+ = \emptyset$\;
  $LB = 0$ and $UB = \infty$\;
    \While{Time left AND No optimal solution found}{
        Formulate the \textbf{K-EMP}\;
        Initiate it with all Benders cuts found previously, LB, and UB\;
        \Begin(Solve \textbf{K-EMP} using the algorithm proposed by Belieres et al. \cite{Belieres:2019}:){
            \If{No time left OR optimal solution found}
               {Break}
            \ElseIf{UB/LB did not improved more than $\imprlb$ in the last $\timelimitlb$ seconds}{
                \If{Less than $\minmastersols$ produced in the last $\timelimitlb$ seconds}{
                    \If{$\K$ can increase}{
                        $\K = \lceil \frac{\K - K^-}{2} \rceil $ \;
                        Break\;
                    }
                }
                \Else{
                \If{$\K$ can decrease}{
                    \If{$K^+ = \emptyset$}{
                        $\K = \lceil \frac{\K_{max} - \K}{2} \rceil$\;
                        Break\;
                     }
                     \Else{
                        $\K = \lceil \frac{K^+ - \K}{2} \rceil $\;
                        Break\;
                     }
                    }
                }
            }
        } 
    Update $K^-$ and $K^+$\;
    Keep all Benders cuts in memory\;
    Update best lower bound LB and best upper bound UB\;
}
  \KwResult{Best lower bound, LB, and best upper bound, UB}
  \caption{\label{alg:phase1}%
    Phase I}
\end{algorithm}

\begin{algorithm}
    \small
    \DontPrintSemicolon
    \KwData{Initial lower bound, LB, Initial upper bound, UB, Time limit $\timelimitphasetwo$}
    1-partition $\leftarrow$ whole product set $\products$ \;
    \For{$\K$ ranging from $1$ to $|\products|-1$}{
        Identify product subset from the $\K$-partition with a matching rate lower than 1\;
        \If{No product subset identified}{
            Break\;
        }
        \Else{
            Partition it by applying the 2-medoids algorithm, using distance $1-m( p, p')$ for each $( p, p') \in \products$\;
            $\K+1$-partition $\leftarrow$ combine the obtained subsets with all other product subsets from the $\K$-partition unchanged \;
        }
    }
    Formulate the \textbf{K-EMP} associated to the last partition found\;
    Initiate it with LB and UB\;
    Solve \textbf{K-EMP} with a generic branch-and-cut implementation using a time limit $\timelimitphasetwo$ \;
    \KwResult{Final solution, UB}
    \caption{\label{alg:phase2}%
    Phase II}
\end{algorithm}

\end{document}